\documentclass[journal,twoside,web]{IEEEtran}
 \usepackage{url}


%

%
\usepackage[caption=false,font=footnotesize,labelfont=sf,textfont=sf]{subfig}

\usepackage{graphicx}
\usepackage{titling}
\usepackage{epsfig}
\usepackage{amsmath,siunitx}
\usepackage{units}
\usepackage{amssymb}
\usepackage{algorithm}
\usepackage{algpseudocode}
\usepackage{booktabs}
\usepackage{setspace}
\usepackage{nth}
\usepackage{xfrac}
\usepackage{dblfloatfix}
\usepackage{lipsum}
\usepackage{multicol}
\usepackage[export]{adjustbox}
\usepackage{textcomp}
\usepackage{graphicx}  
\usepackage{booktabs}

\usepackage{amsmath,amsfonts,bm}









\def\eqref#1{equation~\ref{#1}}









\def\1{\bm{1}}




\def\rvx{{\mathbf{x}}}

\def\rvz{{\mathbf{z}}}







\DeclareMathAlphabet{\mathsfit}{\encodingdefault}{\sfdefault}{m}{sl}
\SetMathAlphabet{\mathsfit}{bold}{\encodingdefault}{\sfdefault}{bx}{n}













\renewcommand{\textrightarrow}{$\rightarrow$}
\usepackage{pdfpages}

\urlstyle{rm}

\usepackage{amsthm}
\usepackage[utf8]{inputenc}
\usepackage[english]{babel}
\usepackage{soul}
 
\newtheorem{lemma}{Lemma}

\makeatletter

\def\rvz{{\mathbf{z}}}
\def\rvx{\mathbf{x}}

\usepackage{algcompatible}
%
\usepackage[noadjust]{cite}


\newcommand\blfootnote[1]{%
  \begingroup
  \renewcommand\thefootnote{}\footnote{#1}%
  \addtocounter{footnote}{-1}%
  \endgroup
}
\DeclareMathOperator*{\argminB}{argmin}

\makeatletter
\newcounter{phase}[algorithm]
\newlength{\phaserulewidth}
\newcommand{\setphaserulewidth}{\setlength{\phaserulewidth}}
\newcommand{\phase}[1]{%
  \vspace{-1.9ex}
  \Statex\leavevmode\llap{\rule{\dimexpr\labelwidth+\labelsep}{\phaserulewidth}}\rule{\linewidth}{\phaserulewidth}
  \Statex\strut\refstepcounter{phase}\textbf{#1}
  \vspace{-1.9ex}\Statex\leavevmode\llap{\rule{\dimexpr\labelwidth+\labelsep}{\phaserulewidth}}\rule{\linewidth}{\phaserulewidth}}
\makeatother

\setphaserulewidth{.35pt}

\algrenewcommand\algorithmicindent{0.9em}
\usepackage{setspace}

\newcommand{\cc}{\textcolor{black}}
\newcommand{\bb}{\textcolor{black}}

\usepackage{array}

\usepackage{url}

\begin{document}
%
\title{Target-Independent Domain Adaptation for WBC Classification using Generative Latent Search}
 \author{
Prashant Pandey, Prathosh AP, Vinay Kyatham, Deepak Mishra and 
Tathagato Rai Dastidar
}
\date{}
\markboth{IEEE Transactions on Medical Imaging}%
{Prashant \MakeLowercase{\textit{et al.}}: Target-independent Domain Adaptation for WBC Classification using Generative Latent Search}
\maketitle

\setlength{\dbltextfloatsep}{0.3cm}
\begin{abstract}

Automating the classification of camera-obtained microscopic images of White Blood Cells (WBCs) and related cell subtypes has assumed importance since it aids the laborious manual process of review and diagnosis. Several State-Of-The-Art (SOTA) methods developed using Deep Convolutional Neural Networks suffer from the problem of domain shift - severe performance degradation  when they are tested on data (target) obtained in a setting different from that of the training (source). The change in the target data might be caused by factors such as differences in camera/microscope types, lenses, lighting-conditions etc. This problem can potentially be solved using Unsupervised Domain Adaptation (UDA) techniques albeit standard algorithms presuppose the existence of a sufficient amount of unlabelled target data which is not always the case with medical images. In this paper, we propose a method for UDA that is devoid of the need for target data.
Given a test image from the target data, we obtain its `closest-clone' from the source data that is used as a proxy in the classifier. We prove the existence of such a clone given that infinite number of data points can be sampled from the source distribution.
We propose a method in which a latent-variable generative model based on variational inference is used to simultaneously sample and find the `closest-clone' from the source distribution through an optimization procedure in the latent space. \bb{We demonstrate the efficacy of the proposed method over several SOTA UDA methods for WBC classification on datasets captured using different imaging modalities under multiple settings.}
\blfootnote
{Copyright (c) 2019 IEEE. Personal use of this material is permitted. However, permission to use this material for any other purposes must be obtained from the IEEE by sending a request to pubs-permissions@ieee.org. 

Prashant Pandey and Prathosh AP are with Department of Electrical Engineering, Indian Institute of Technology Delhi, New Delhi 110016, India. Deepak Mishra is with Department of Computer Science and Engineering, Indian Institute of Technology Jodhpur, Rajasthan 342037, India. Vinay Kyatham and Tathagato Rai Dastidar are with SigTuple Technologies Pvt. Ltd., Bangalore 560102, India. Email: getprashant57@gmail.com, prathoshap@iitd.ac.in, dmishra@iitj.ac.in, trd@sigtuple.com, vinay.k@sigtuple.com. \bb{The code for our implementation is available at https://github.com/prinshul/WBC-Classification-UDA.}
}

\end{abstract}

\begin{IEEEkeywords} 
WBC, Microscopic imaging, Unsupervised domain adaptation, Generative models, VAE.
\end{IEEEkeywords}
%
\IEEEpeerreviewmaketitle
\section{Introduction}
\subsection{Background}
\IEEEPARstart{M}{icroscopic} review of Peripheral Blood Smear (PBS) slides by clinical pathologists is considered as the gold standard for detection of various disorders~\cite{bloodbook}. {This requires manual counting and classification of various types of cells, including White Blood Cells} (WBCs or leukocytes) and analysing their morphological characteristics in PBS slides.
The presence, absence, or relative counts of these cells help in the diagnosis of several types of diseases, including different forms of blood cancer, anaemia, and presence of parasites like in malaria.
This process of manual review is both laborious and error prone.
In addition, due to variations in stain, smearing process, the differentiation between various subclasses of cells is often blurry.
It takes significant expertise and experience to correctly classify all types of cells.
Lack of qualified medical professionals, especially in non-urban areas of developing countries, accentuates the problem.
Furthermore, the misdiagnosis, often caused by lack of adequate time to examine a slide thoroughly, can even lead to fatalities.
Thus, automating and standardising this process is a pressing need.

Several attempts have been made to automate some of these manual processes using methods ranging from classical computer vision~\cite{lee2013performance,young1972classification,bikhet2000segmentation} to image cytometry~\cite{hagwood2011evaluation, lippeveld2019classification}. While classical vision techniques suffer from issues like poor-generalization, image cytometry is limited by its operational speed and inability to engineer complex features~\cite{chen2016deep}. 
An alternative is to harness the power of Deep Convolutional Neural Networks (CNNs) in addressing some of these issues~\cite{qin2018fine}. In SC-CNN~\cite{sirinukunwattana2016locality}, a weighted sum of multiple classifiers is used to predict the class label of cell nuclei detected with a Spatially Constrained CNN. In \cite{mahmood2019deep}, a Conditional Generative Adversarial Network (cGAN)~\cite{mirza2014conditional} is used for nuclei segmentation, a fundamental task for cell classification. MGCNN~\cite{huang2019blood} is a White Blood Cells classification framework that combines modulated Gabor wavelet~\cite{lee1996image} and deep CNN kernels. A few commercial products too have been built utilizing some of these techniques. CellaVision~\cite{cellavision}, Shonit~\cite{shonit}, etc., automate the counting and classification of leukocytes and other blood cells. These systems consist of an automated microscope equipped with a digital camera, which captures the images of a biological sample on a glass slide. A software based analysis system, built using CNN models, is then used to localise and classify different types of cells in the sample.

\subsection{Motivation and Problem setting}

Even though the aforementioned models and systems are effective in their own ways, they suffer from certain issues that may limit their utility. For instance, Deep CNN models used for microscopic image classification are typically trained using proprietary datasets. These datasets tend to be homogeneous in terms of the capture device -- microscopes, lens and cameras used. This homogeneity and limited number of images in the training dataset cause the models trained on them to over-fit on specific characteristics of the image capturing device. As a result, when images captured with a different device or camera are presented to these models, they often wrongly classify new images, even though the trained human observers will have no difficulty in classification (images shown in Figure \ref{fig:wbctypes}). Hence, as the image capturing device changes, these models fail to adapt to the new input data distribution. This is known as the domain shift problem. Domain shift also occurs when the underlying imaging modality itself changes. For instance, a \cc{deep learning} model trained on Flow Cytometry images \cite{lippeveld2019classification} will not readily generalize for microscopic PBS images even though both capture WBCs. The problem of domain shift exists not only for medical images, but for any \cc{deep learning} system trained with single image source \cite{tzeng2017adversarial}.
\par A natural solution to this problem is to (re)-train the model with large amount of data obtained from the new device. However, generating sufficient quantity of annotated medical data is a time consuming and costly process. In addition, bottlenecks such as regulatory clearances, cause a large development cycle and delay in building such systems. We consider one such problem in this paper, where performance of CNNs trained on a dataset from a single source camera for automatic classification of images of WBCs taken from PBS, degrade when tested on unseen target dataset collected from different cameras. This falls within the ambit of a well-known computer vision problem known as \cc{Unsupervised Domain Adaptation} (UDA). \bb{However, almost all the SOTA methods on UDA~\cite{tzeng2017adversarial, ganin2017domain, sankaranarayanan2018generate} need access to the unlabelled target data during the time of training. While it may be feasible to obtain unlabelled target data, retraining of the UDA model for every newly emerging target domain might be infeasible, post their deployment in the field. Therefore, an unsupervised domain adaptation method that can operate without target data is desirable \cite{pandey2020skin}.} Motivated by these observations, in this paper we propose a UDA technique for WBC classification with following core contributions:

\begin{enumerate}
 
    \item We propose a UDA technique that does not require access to the target data \bb{during the time of training.}   
    \item We cast the problem of UDA as finding the `closest-clone' in the source domain for a given target image that is used as a proxy for the target image in the classifier trained on the source data.
    \item We theoretically prove the existence of the `closest-clone' given that infinite data points can be sampled from the source distribution. 
    \item We propose an optimization method over the latent space of variational inference based Deep generative model, to find the aforementioned clone through implicit sampling. 
    \item We demonstrate through extensive experimentation, the efficacy of the proposed method over several state-of-the-art UDA techniques for WBC classification \bb{on several datasets obtained using different imaging modalities with multiple domain shifts. We also validate our algorithm on the standard datasets used for UDA.}
\end{enumerate}

\section{Related work}
\cc{Unsupervised Domain Adaptation} (UDA) refers to the design of techniques aimed at improving the performance of machine learning tasks such as classification and segmentation when the classifier is trained using labels only from a source domain and tested on data \cc{from related} but a shifted target domain. In this section, we present a review of the state-of-the-art UDA techniques based on their principle of operation and their use in the medical imaging community.

\subsubsection{Adversarial-learning} These methods~\cite{tzeng2017adversarial, ganin2017domain, sankaranarayanan2018generate} learn domain-invariant representations using the principles of adversarial learning. 
ADDA~\cite{tzeng2017adversarial} employs a source network, pre-trained with labeled source data. Adversarial adaptation is performed by learning a target network such that a domain discriminator fails to predict the domain labels of the source and target features. During inference, the target images are mapped to the shared feature space by using the target network which are predicted by the source classifier. Generate To Adapt (GTA)~\cite{sankaranarayanan2018generate} learns domain invariant embeddings using a joint generative-discriminative set-up. During training, a feature extraction network outputs embeddings that are used by label prediction network for classification with a \cc{Generative Adversarial Network} (GAN) framework to generate realistic source images. DIRT-T~\cite{shu2018dirt} employs a Virtual Adversarial Domain Adaptation (VADA) model that pushes the decision boundaries away from regions of high data density by penalizing violation of the cluster assumption in the target domain. Transferable Adversarial Training (TAT)~\cite{liu2019transferable} generates transferable examples to fill in the gap between the source and target domains without distorting feature distributions. Domain Agnostic Learning (DAL)~\cite{peng2019domain} uses Deep Adversarial Disentangled Auto-Encoders (DADA) to disentangle domain-invariant features in the latent space by minimizing  the mutual information between domain-invariant and domain-specific features. The principles of adversarial feature learning has been used in \cite{mahmood2018unsupervised,gadermayr2019generative} to transform real images to a synthetic-like representation using unlabeled  synthetic endoscopy images and achieve stain independence. In~\cite{bertinetto2016fully}, a siamese architecture with adversarial training is used to improve the classification performance of target prostate histopathology whole-slide images. Zhang et al.~\cite{zhang2019noise} used adversarial learning for a noise adaptation task that allows a trained model to work effectively for medical images with different noise patterns.
\subsubsection{Target Reconstruction}These approaches for UDA reconstructs source or target samples as an auxiliary task that simultaneously focuses on creating a shared representation between the two domains while keeping the individual characteristics of each domain intact.
CyCADA~\cite{hoffman2017cycada} adapts between domains by aligning 
both generative and latent space
representations, with cycle and semantic consistency loss. PixelDA~\cite{bousmalis2017unsupervised} learns transformation in the pixel space from one domain to the other using task-specific and content–similarity losses. SBADA-GAN~\cite{russo2018source} maps source samples into the target domain and vice versa by imposing a class consistency loss to improve the quality of reconstructed images. I2I Adapt~\cite{murez2018image} is a framework that learns from the source domain and adapt to the target domain by extraction of domain agnostic features, domain specific reconstruction with cycle consistency losses. Tulder et al.~\cite{van2018learning} proposed a representation learning method that transforms data from different sources to a shared feature representation using per-feature normalization, a cross-modality based objective function. Goetz et al.~\cite{goetz2015dalsa} used domain adaptation to correct the sampling bias introduced with sparsely labeled MR images for tissue classification.
\subsubsection{Divergence Minimization} In these methods, source and target distributions are aligned by minimizing a divergence measure between the two distributions. Joint Adaptation Networks (JAN)~\cite{long2017deep} learns a transfer network by aligning the joint distributions of multiple domain-specific layers across domains based on a \cc{Joint Maximum Mean Discrepancy} (JMMD) criterion. Maximum Classifier Discrepancy (MCD)~\cite{saito2018maximum} aligns distributions of source and target by utilizing the task-specific decision boundaries. Task-specific classifiers are trained to detect the target samples that are far from the support of the source.
Contrastive Adaptation Network (CAN)~\cite{kang2019contrastive} estimates
the underlying label hypothesis of target samples through clustering and adapts the feature representations according to the  Contrastive Domain Discrepancy (CDD) metric. Pacheco et al.~\cite{pacheco2019unsupervised} addressed the discrepancies related to the stem cell differentiation process by minimizing a \cc{Maximum Mean Discrepancy} (MMD) based loss function in a Recurrent Neural Network (RNN) classifier.
\subsubsection{Domain Randomization}
\bb{Domain Randomization~\cite{tobin2017domain} (DR) is another class of methods related to UDA that are used to improve the generalization of classifiers. The idea is to reduce the domain shift by randomizing properties in the training environment (like source domain). Every data point in the source domain is perturbed randomly during training while assigning the same ground truth to the perturbed samples.  In methods such as~\cite{mahmood2018deep}, cinematically rendered source domain images are varied in color and texture.
For RGB images, such transformations can be obtained by varying hue, saturation, contrast and brightness.
In~\cite{toth2018training}, source images intensity is divided into multiple non-overlapping ranges. A random perturbation is added to the start/end pixel values by sampling from a Gaussian distribution. Finally, one of the following randomisation is applied to each range, a) shift the intensity values by adding a random value from a uniform distribution or b) transform the intensity values using cumulative distribution function of beta distribution or
c) simply invert the intensity range. \cite{zakharov2019deceptionnet} varies source images background color, add uniform noise, change the illumination and distort source images with different scaling factors.}

\section{Proposed Method}
\subsection{Motivation}
All the UDA methods mentioned in the previous section assumes that one has access to images from the target distribution. These images are either used to retrain the original classifier in a domain-invariant way \cite{tzeng2017adversarial,ganin2017domain,sankaranarayanan2018generate} or to align the target distribution to the source distribution \cite{hoffman2017cycada,bousmalis2017unsupervised,long2017deep,kang2019contrastive}. Also, in most of the methods \cite{tzeng2017adversarial,ganin2017domain,sankaranarayanan2018generate,kang2019contrastive}, the original classifier trained on the source data is altered, so that a new decision boundary is learned using the images from the target data in an unsupervised manner. However, in many practical situations, such as the current one, there would neither be access to the target data nor the scope to retrain the classifier. Further, a new unseen target domain may arise in the field which was not used during adaptation. 
\par We propose to address these issues in this paper by first assuming that the classifier learned on the source data (Oracle classifier) will perform well as long as the data comes from the source distribution. Subsequently, (i) we learn to sample from the source distribution and (ii) given an image from the target distribution, we find an image from the source distribution that is arbitrarily close (`closest-clone') to the given target image, under some distance metric. Finally, the target image is replaced with its `closest-clone' from the source distribution before its class is inferred by the Oracle classifier.

\setlength{\dbltextfloatsep}{0.2cm}
\begin{figure*}
\includegraphics[width=1.0\textwidth,height=.358\textwidth]{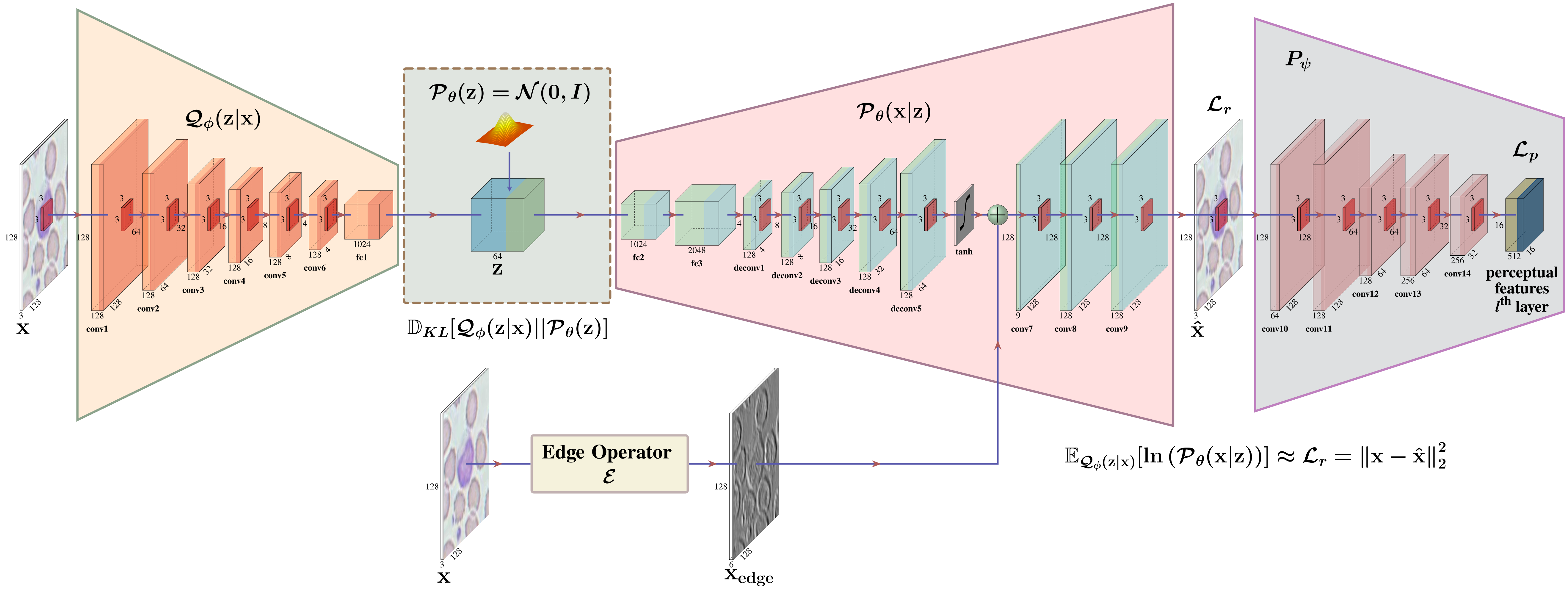}
\caption{\cc{The architecture for the Variational Auto-Encoder in the proposed method (TIGDA). Edges of the input microscopic image is concatenated with the features from the decoder $h_{\theta}$. The encoder and decoder parameters $\phi$, $\theta$ are optimized with reconstruction loss $\mathcal{L}_{r}$, KL-divergence loss $\mathbb{D}_{KL}$ and the perceptual loss $\mathcal{L}_{p}$. The perceptual model ${P}_{\psi}$ outputs $l^{\text{th}}$ layer features of VGG-16 (or ResNet-50) classifier trained on source data. A zero mean and unit variance isotropic Gaussian prior is imposed over the latent space $\bm{\mathbf{z}}$.}}
\label{fig:vaetrain}
\end{figure*}
\subsection{Existence of closest source `clone' }
To begin with, we prove that given an image from the target distribution, there exists an arbitrarily close image in the source distribution (named as `closest-clone'), provided infinite data can be sampled from the source distribution \cite{cover1967nearest}. \par Let $\mathcal{P}_s(\mathbf{x})$ and $\mathcal{P}_t(\mathbf{x})$ denote the source and the target distributions, respectively. We assume that the the underlying random variable on which $\mathcal{P}_s$  and $\mathcal{P}_t$ are defined, forms a separable metric space $\{\mathfrak{X,D}\}$ where $\mathfrak{D}$ is some distance metric. Let $\mathcal{S}_n=\{\mathbf{x}_1, \mathbf{x}_2, \mathbf{x}_3,...., \mathbf{x}_n  \}$ be i.i.d. points drawn from $\mathcal{P}_s(\mathbf{x})$ and $\tilde{\mathbf{x}}_\mathcal{T}$ be any point drawn from $\mathcal{P}_t(\mathbf{x})$. The following lemma asserts that as $n\rightarrow\infty$, there exists a point in $\mathcal{S}_n$ that it arbitrarily close to $\tilde{\mathbf{x}}_\mathcal{T}$, with probability one.

\begin{lemma}

If $\tilde{\mathbf{x}}_\mathcal{S} \in \mathcal{S}_n$ is the point such that $\mathfrak{D\{\tilde{\mathbf{x}}_\mathcal{T}},\tilde{{\mathbf{x}}}_\mathcal{S}\}< \mathfrak{D\{\tilde{\mathbf{x}}_\mathcal{T}},\mathbf{x} \} \ \forall \ \mathbf{x}\ \in\ \mathcal{S}_n $, then as ${n\to\infty}$,\  $\tilde{\mathbf{x}}_\mathcal{S}$ converges to  $\tilde{\mathbf{x}}_\mathcal{T}$ with probability  $1$ \bb{(Refer supplementary material for proof).} 
\end{lemma}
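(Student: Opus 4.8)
\emph{Proof sketch.} The plan is to reduce the statement to the classical nearest--neighbour convergence lemma of \cite{cover1967nearest}. Condition on the target point $\tilde{\mathbf{x}}_\mathcal{T}$, and for $\delta>0$ let $B_\delta=\{\mathbf{x}\in\mathfrak{X}:\mathfrak{D}\{\tilde{\mathbf{x}}_\mathcal{T},\mathbf{x}\}\le\delta\}$. Since the $\mathbf{x}_i$ are i.i.d.\ from $\mathcal{P}_s$ and $\tilde{\mathbf{x}}_\mathcal{S}$ is, by definition, the closest of $\mathbf{x}_1,\dots,\mathbf{x}_n$ to $\tilde{\mathbf{x}}_\mathcal{T}$, the event that the nearest--neighbour distance $d_n:=\mathfrak{D}\{\tilde{\mathbf{x}}_\mathcal{T},\tilde{\mathbf{x}}_\mathcal{S}\}$ exceeds $\delta$ is exactly the event that no $\mathbf{x}_i$ lands in $B_\delta$, so that $P(d_n>\delta\mid\tilde{\mathbf{x}}_\mathcal{T})=(1-\mathcal{P}_s(B_\delta))^{n}$. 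Everything then hinges on the claim $\mathcal{P}_s(B_\delta)>0$ for every $\delta>0$.

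To establish that claim I would invoke separability of $\{\mathfrak{X},\mathfrak{D}\}$: it guarantees that $\mathcal{P}_s$ admits a well-defined support $\mathrm{supp}(\mathcal{P}_s)$ (the smallest closed set of full measure), that $\mathcal{P}_s(\mathrm{supp}(\mathcal{P}_s))=1$, and that every ball centred at a point of $\mathrm{supp}(\mathcal{P}_s)$ has strictly positive $\mathcal{P}_s$-measure. Under the modelling assumption that the source ``covers'' the target, i.e.\ $\tilde{\mathbf{x}}_\mathcal{T}\in\mathrm{supp}(\mathcal{P}_s)$ with $\mathcal{P}_t$-probability one, we then get $p_\delta:=\mathcal{P}_s(B_\delta)>0$, hence $(1-p_\delta)^n\to0$ as $n\to\infty$.

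It remains to promote this to almost-sure convergence, which I would do by exploiting monotonicity. Because $\mathcal{S}_n\subseteq\mathcal{S}_{n+1}$, the sequence $d_n$ is non-increasing, so $d_n\downarrow d_\infty$ for some $d_\infty\ge0$; and for each integer $k\ge1$ we have $\{d_\infty>1/k\}\subseteq\bigcap_n\{d_n>1/k\}$, whence $P(d_\infty>1/k\mid\tilde{\mathbf{x}}_\mathcal{T})=\lim_n(1-p_{1/k})^{n}=0$. A countable union over $k$ gives $P(d_\infty>0\mid\tilde{\mathbf{x}}_\mathcal{T})=0$ for $\mathcal{P}_t$-a.e.\ $\tilde{\mathbf{x}}_\mathcal{T}$, and integrating over $\tilde{\mathbf{x}}_\mathcal{T}\sim\mathcal{P}_t$ yields $d_\infty=0$, i.e.\ $\tilde{\mathbf{x}}_\mathcal{S}\to\tilde{\mathbf{x}}_\mathcal{T}$ with probability one. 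An equivalent route is Borel--Cantelli: $\sum_n(1-p_{1/k})^n<\infty$, so $d_n>1/k$ occurs only finitely often almost surely.

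The only genuine obstacle is conceptual rather than computational: the positivity $p_\delta>0$ fails whenever $\tilde{\mathbf{x}}_\mathcal{T}$ lies outside $\mathrm{supp}(\mathcal{P}_s)$, so the lemma really needs the hypothesis $\mathrm{supp}(\mathcal{P}_t)\subseteq\mathrm{supp}(\mathcal{P}_s)$ (the target domain sits inside the source domain in the metric sense). I would state this assumption explicitly; granted it, the remaining steps are a routine i.i.d.-coverage argument.
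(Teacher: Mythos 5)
Your proof follows the same core argument as the paper's: define the closed ball $\mathbb{B}_\delta(\tilde{\mathbf{x}}_\mathcal{T})$, observe that the nearest-neighbour distance exceeds $\delta$ exactly when no sample lands in the ball, compute this probability as $\bigl(1-\mathcal{P}_s(\mathbb{B}_\delta)\bigr)^n$, and let $n\to\infty$. However, your version is strictly more careful in two places where the paper's proof has genuine gaps. First, the paper asserts that separability alone makes every ball around $\tilde{\mathbf{x}}_\mathcal{T}$ have positive $\mathcal{P}_s$-measure; as you correctly point out, separability only guarantees this for points in $\mathrm{supp}(\mathcal{P}_s)$, and since $\tilde{\mathbf{x}}_\mathcal{T}$ is drawn from a \emph{different} distribution $\mathcal{P}_t$, the lemma silently requires $\mathrm{supp}(\mathcal{P}_t)\subseteq\mathrm{supp}(\mathcal{P}_s)$ --- an assumption the paper never states but which you rightly make explicit. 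Second, the paper stops at $\Pr[\,d_n\le\delta\,]\to 1$ for each fixed $\delta$, which is convergence in probability; the claimed ``with probability $1$'' needs the additional step you supply, namely the monotonicity of $d_n$ (or Borel--Cantelli) plus a countable intersection over $\delta=1/k$. So your proposal is not merely equivalent to the paper's proof --- it is a corrected and completed version of it.
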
 

Lemma 1 guarantees that given an image from the target distribution, an image from the source distribution, that is arbitrarily close to the given target image can be found out given the following requirements are met:
 \begin{itemize}
    \item Given a few images from the source distribution $\mathcal{P}_s$, one can sample infinite images from it. 
    \item Given infinite samples from $\mathcal{P}_s$, it is possible to find the `closest-clone' (under $\mathfrak{D}$) in $\mathcal{P}_s$, to the target image $\tilde{\mathbf{x}}_\mathcal{T}$.
 \end{itemize}
To satisfy the above requirements, in subsequent sections, we employ variational inference based sampling methods on the source distribution with which one can implicitly sample and find the `closest-clone' simultaneously. 
\subsection{Variational inference for source sampling}
In variational inference based generative models \cite{kingma2013auto}, it is assumed that the data or the observed variable (in this case images from $\mathcal{P}_s$) is generated via a two step process: (i) sample from the distribution $\mathcal{P}_{\theta}(\mathbf{z})$ of an unobserved or latent variable $\mathbf{z}$, (ii) given a data point from the latent variable, sample from the conditional distribution $\mathcal{P}_\theta(\mathbf{x|z})$ to obtain the data. Owing to the fact that the parameters of the true latent prior $\mathcal{P}_{\theta}(\mathbf{z})$ and data conditional $\mathcal{P}_\theta(\mathbf{x|z})$ are unknown, and  the posterior $\mathcal{P}_\theta(\mathbf{z|x})$ is intractable, a variational distribution, $\mathcal{Q}_\phi(\mathbf{z|x})$ is used to approximate the true posterior. With this, it can be shown that the log-likelihood of the observed data will decompose into two terms (Eq. \ref{elbo1}), an irreducible non-negative KL-divergence between  $\mathcal{P}_\theta(\mathbf{z|x})$ and  $\mathcal{Q}_\phi(\mathbf{z|x})$ and the Evidence Lower Bound (ELBO) given by Eq. \ref{elbo}. 

\begin{equation}
 \label{elbo1}
\ln\mathcal{P}_{\theta}(\mathbf{x})=\mathcal{L(\theta,\phi)}+\mathbb{D}_{KL}[\mathcal{Q}_{\phi}(\mathbf{z}|\mathbf{x})||\mathcal{P}_{\theta}(\mathbf{z|x})]
\end{equation} Here, $\mathcal{L(\theta,\phi)}$ represents ELBO which is given by,
\begin{equation}
\mathcal{L(\theta,\phi)=\mathbb{E}}_{\mathcal{Q}_{\phi}(\mathbf{z}|\mathbf{x})}[\ln\left(\mathcal{P}_{\theta}(\mathbf{x|}\mathbf{z})\right)]-\mathbb{D}_{KL}[\mathcal{Q}_{\phi}(\mathbf{z}|\mathbf{x})||\mathcal{P}_{\theta}(\mathbf{z})]
    \label{elbo}
\end{equation}

In Eq. \ref{elbo1}, the KL-term is irreducible and non-negative and thus,  $\mathcal{L(\theta,\phi)}$ serves as a lower bound on the data log-likelihood which is optimized. In deep generative model frameworks, $\mathcal{Q}_{\phi}(\mathbf{z|x})$ and $\mathcal{P}_{\theta}(\mathbf{x|z})$ are parameterized using probabilistic encoder $g_\phi$ (that outputs the parameters $\mu_\rvz$ and $\sigma_\rvz$ of a distribution)  and decoder $h_\theta$ neural networks with parameters $\phi$ and $\theta$ respectively, that maps the data space into latent space and vice-versa. Additionally,  $\mathcal{P}_{\theta}(\mathbf{z})$ is taken to be an arbitrary prior on $\mathbf{z}$ which is usually a $0$ mean and unit variance Gaussian distribution. The first term in Eq. \ref{elbo} is approximated using a norm-based divergence metric between the input and the output of the decoder as below:

\cc{
\begin{equation}
\mathbb{E}_{\mathcal{Q}_{\phi}(\mathbf{z}|\mathbf{x})}[\ln\left(\mathcal{P}_{\theta}(\mathbf{x|}\mathbf{z})\right)]\approx\mathcal{L}_r=\left\Vert \rvx - \hat{\rvx}\right\Vert_2^2
    \label{reconst}
\end{equation}}

Note that Eq. \ref{reconst} can be seen as `reconstruction' or `Auto-Encoding' of the data. Further, the second term in ELBO employs a variational approximation to the true posterior $\mathcal{P}_\theta(\mathbf{z|x})$. Thus, the aforementioned method is famously referred to as the Variational Auto-Encoder (VAE) \cite{kingma2013auto}. 
For the current problem of interest, a VAE is trained using the images from the source distribution $\mathcal{S}_n$ and once trained,  the decoder network serves as a sampler for the source distribution using a two step process: (i) sample $\mathbf{z}\sim \mathcal{N}(0,I)$, (ii) sample $\mathbf{x}$  as the output of the decoder $h_{\theta}$.

VAEs are know to produce blurred images in their conventional formulation with norm-based losses. To address this, we use the edge information (extracted using standard edge detectors) of the input image by passing it to the decoder via a skip connection, as shown in Figure~\ref{fig:vaetrain}. Rationale behind this is that unlike features such as colour and contrast, edges are in general invariant to the changes in camera characteristics. Edge information reduces the blurring due to the decoder as shown in Figure \ref{fig:ablation_imgs} and ablation studies in Table \ref{tab:ablation}. \par \bb{Further, we also incorporate the perceptual loss, which is known to enhance the generation quality of VAEs, along with the standard norm-based losses. Perceptual loss $\mathcal{L}_{p}$ between two images $\rvx$ and $\hat{\rvx}$ is defined as the Euclidean distance between the representations or the features obtained under a pre-trained classifier model $\big(P_\psi\big)$. Mathematically,
\begin{equation}
\mathcal{L}_{p} =  \left\Vert P_{\psi}( \rvx) - P_{\psi}(\hat{\rvx})\right\Vert_2^2   
\end{equation}
The idea behind $\mathcal{L}_{p}$ is that the distance metrics in a representational space learned by a classifier model trained on large scale data are better than on raw image space. This is shown to enhance image quality in several applications \cite{yang2019unsupervised}.} Figure \ref{fig:vaetrain} depicts the network diagram of the VAE on the source data with the proposed edge concatenation. 
\setlength{\textfloatsep}{0pt}

\setlength{\textfloatsep}{0pt}
\subsection{Finding `closest-clone' through Latent Search}
As mentioned in the previous sections, the objective is to simultaneously sample and search for the `closest-clone' in the source distribution, given a sample from target distribution.  Suppose a VAE has been trained on the source distribution $\mathcal{P}_s(\mathbf{x})$, the decoder $h_\theta$ of which outputs a `de-novo' image from $\mathcal{P}_s(\mathbf{x})$ by taking a normally distributed latent variable as input. That is, 
\setlength{\textfloatsep}{0pt}
\begin{equation}
\forall \ \mathbf{z}\sim \mathcal{N}(0,I), \hat{\mathbf{x}}=h_\theta(\mathbf{z})\sim \mathcal{P}_s(\hat{\mathbf{x}})
\end{equation}

Our goal is to find the `closest-clone' under some distance metric $\mathfrak{D}$, for any given image from the target distribution. Mathematically, given a $\tilde{\mathbf{x}}_\mathcal{T} \sim \mathcal{P}_t(\mathbf{x})$, find $\tilde{{\mathbf{x}}}_\mathcal{S}$ as follows:

\begin{equation}
\begin{aligned}
\tilde{{\mathbf{x}}}_\mathcal{S}= {} & \ h_\theta(\tilde{\mathbf{z}}_\mathcal{S}):\bigg \{ \mathfrak{D\{\tilde{\mathbf{x}}_\mathcal{T}},\tilde{{\mathbf{x}}}_\mathcal{S}\}< \mathfrak{D\{{\mathbf{x}}},\tilde{\mathbf{x}}_\mathcal{T} \} \\ & \ \forall \ {\mathbf{x}}\ =  h_\theta(\mathbf{z}) \sim \mathcal{P}_s({\mathbf{x}})
\label{obj}
\end{aligned}
\end{equation}

Since $\mathfrak{D}$ is computable and $h_\theta$ is a neural network that outputs a sample from $\mathcal{P}_s({\mathbf{x}})$ as a function of the latent variable $\mathbf{z}$, finding $\tilde{{\mathbf{x}}}_\mathcal{S}$ (Eq. \ref{obj}) can be cast an optimization problem over $\mathbf{z}$ with minimization of $\mathfrak{D}$ as the objective:
\begin{equation}
\tilde{\mathbf{z}}_\mathcal{S}= \argminB_\mathbf{z}\ \mathfrak{D}\big ( \tilde{\mathbf{x}}_\mathcal{T}, h_\theta(\mathbf{z}) \big )
\label{obj}
\end{equation}
\begin{equation}
\tilde{{\mathbf{x}}}_\mathcal{S}=  h_\theta(\tilde{\mathbf{z}}_\mathcal{S}) 
\end{equation}
The optimization problem is Eq. \ref{obj} can be solved using gradient descent based techniques on the decoder network $h_{\theta^{\ast}}$ $\big ( \theta^{\ast}$ are the parameters of the decoder network trained only on the source images $\mathcal{S}_n \big )$  with respect to $\mathbf{z}$. This implies that given any input image, the optimization problem in Eq. \ref{obj} will be solved to find its `closest-clone' in the source distribution which is used as a proxy in the original classifier trained only on $\mathcal{S}_n$. We call the iterative procedure of finding $\tilde{{\mathbf{x}}}_\mathcal{S}$  through optimization using $h_{\theta^{\ast}}$ as the Latent Search (LS).

Finally, inspired by the observations made  in~\cite{zhao2016loss,mishra2018ultrasound}, we propose
to use \cc{Structural Similarity Index} (SSIM) loss for $\mathfrak{D}$ to conduct the Latent Search. Unlike norm-based losses, SSIM loss helps in preservation of structural information as compared to discrete pixel level information. SSIM  is defined in~\cite{wang2004image} using the three aspects of similarities, luminance $\big(l(\mathbf{x}, \hat{\mathbf{x}})\big)$, contrast $\big(c(\mathbf{x}, \hat{\mathbf{x}})\big)$ and structure $\big(s(\mathbf{x}, \hat{\mathbf{x}})\big)$ that are measured for a pair of images $\{\mathbf{x}, \hat{\mathbf{x}}\}$ as follows:
\setlength{\textfloatsep}{0pt}
\begin{equation}
l(\mathbf{x}, \hat{\mathbf{x}}) = \frac{2\mu_\mathbf{x}\mu_{\hat{\mathbf{x}}}+C_1}{\mu_\mathbf{x}^2+ \mu_{\hat{\mathbf{x}}}^2 + C_1}
\end{equation}
\setlength{\textfloatsep}{0pt}
\begin{equation}
c(\mathbf{x}, \hat{\mathbf{x}}) = \frac{2\sigma_{\mathbf{x}}\sigma_{\hat{\mathbf{x}}}+C_2}{{\sigma_\mathbf{x}}^2+ {\sigma_{\hat{\mathbf{x}}}}^2 + C_2}
\end{equation}
\setlength{\textfloatsep}{0pt}
\begin{equation}
s(\mathbf{x}, \hat{\mathbf{x}}) = \frac{\sigma_{\mathbf{x}{\hat{\mathbf{x}}}}+C_3}{\sigma_{\mathbf{x}}\sigma_{\hat{\mathbf{x}}} + C_3}
\end{equation}
where $\mu$'s denote sample means and $\sigma$'s denote variances. $C_1, C_2$ and $C_3$ are constants as defined in \cite{wang2004image}. With these, SSIM and the corresponding loss function $\mathcal{L}_{ssim}$, for a pair of images $\{\mathbf{x}, \hat{\mathbf{x}}\}$ are defined as: 
\setlength{\textfloatsep}{0pt}
\begin{equation}
\text{SSIM}(\mathbf{x}, \hat{\mathbf{x}}) = l(\mathbf{x}, \hat{\mathbf{x}})^{\alpha} \cdot c(\mathbf{x}, \hat{\mathbf{x}})^{\beta} \cdot s(\mathbf{x}, \hat{\mathbf{x}})^{\gamma}  
\end{equation}
where $\alpha>0$, $\beta>0$ and $\gamma>0$ are parameters used to adjust the relative importance of the three components.
\setlength{\textfloatsep}{0pt}
\begin{equation}
\mathcal{L}_{ssim}(\mathbf{x}, \hat{\mathbf{x}}) = 1 - \text{SSIM}(\mathbf{x}, \hat{\mathbf{x}})
\end{equation}
Since our method does not utilize target images and employs generative Latent Search, we call our method Target-Independent Generative Domain Adaptation (TIGDA). \bb{The target independence of our method refers to the fact that we do not use target data during training, unlike SOTA UDA methods.} The inference for TIGDA is shown in Figure \ref{fig:inf}.
\begin{figure}
\includegraphics[width=.48\textwidth,height=.297\textwidth]{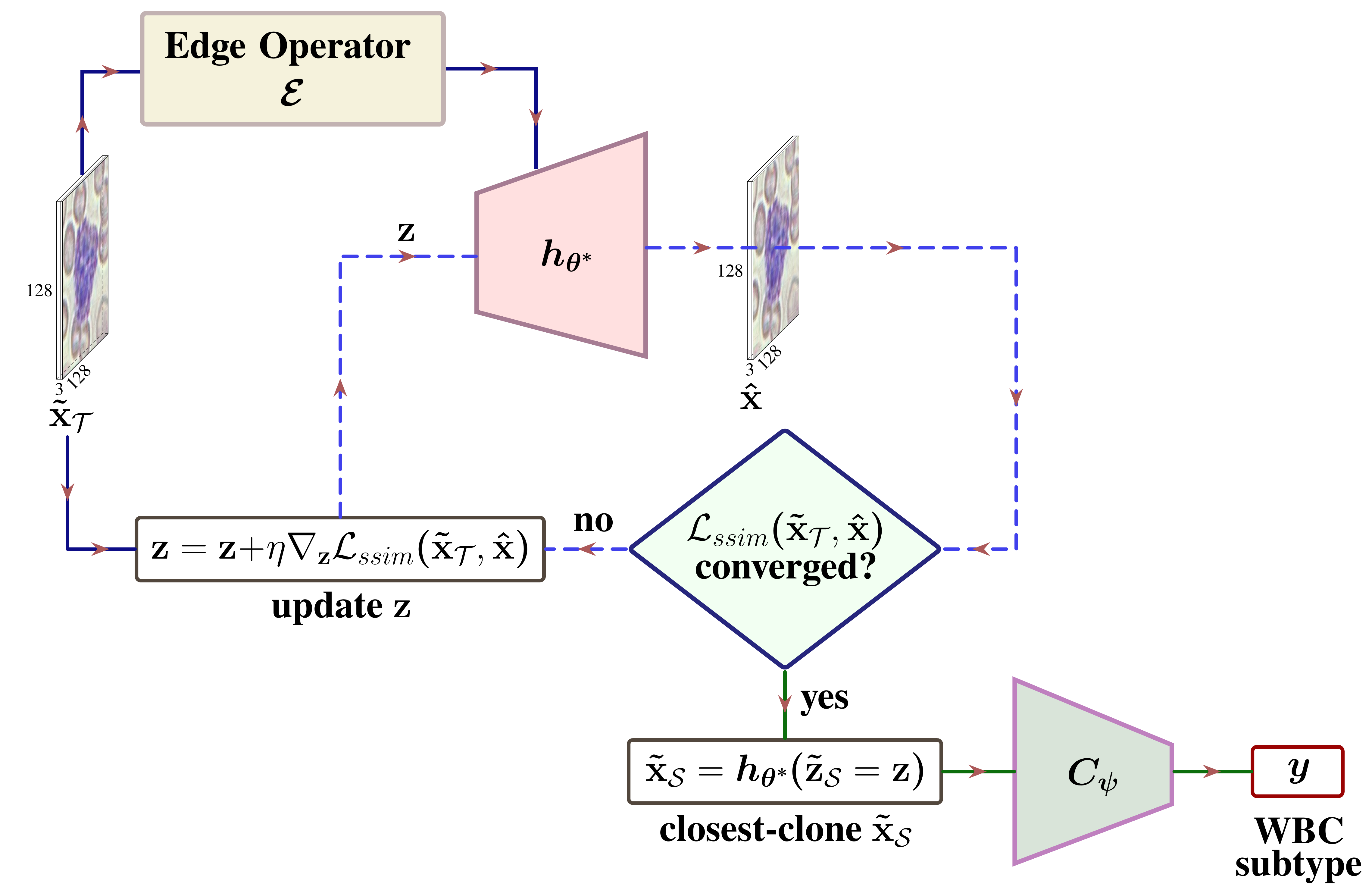}
\caption{Latent Search procedure during inference with TIGDA. The latent vector $\mathbf{z}$ is initialized with a random sample drawn from $\mathcal{N}(0,1)$. Iterations over the latent space $\rvz$ are performed to minimize the Structural Similarity loss $\mathcal{L}_{ssim}$ between the input target image $ \tilde{\mathbf{x}}_\mathcal{T}$ and the predicted target image $\mathbf{\hat x}$, which is the output of the trained decoder (blue dotted lines).  After convergence of $\mathcal{L}_{ssim}$ loss, the optimal latent vector $\bm{{\tilde{\mathbf{z}}}_{\mathcal{S}}}$, generates the `closest-clone' $ \bm{\tilde{\mathbf{x}}}_{\mathcal{S}}$ which is used to predict the class of $ \tilde{\mathbf{x}}_\mathcal{T}$ using the classifier $C_\psi$ trained on source samples.}
\label{fig:inf}
\end{figure}
\begin{figure*}
\includegraphics[width=1.0\textwidth,height=.272\textwidth]{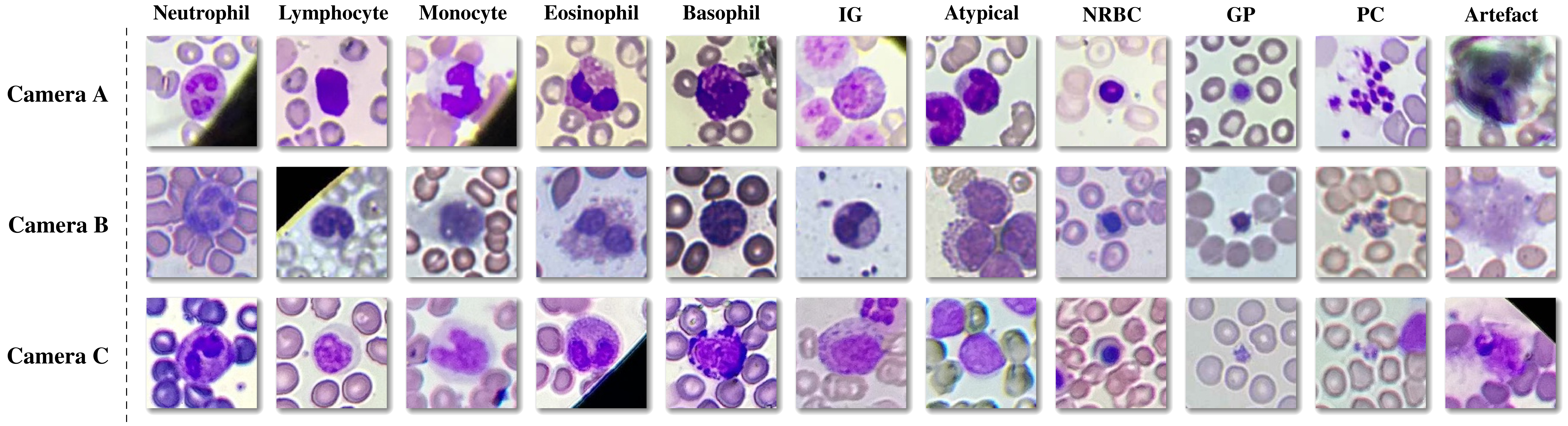}
\caption{Samples of White Blood Cells and related microscopic images (categorized into 11 classes) taken from three different cameras A, B and C. (IG=Immature granulocytes, NRBC=Nucleated red blood cells, GP=Giant platelets, PC=Platelet clumps). It is to be noted that there are no visually distinctive features across cameras but it is easy for a human-pathologist to correctly classify despite camera changes. On the other hand, \cc{deep learning} models fail to generalize across cameras.}
\label{fig:wbctypes}
\end{figure*}
\bgroup
\def\arraystretch{1.3}
\begin{table*}[hbt!]
\caption{Number of White Blood Cells and related microscopic images for each subtype (class) captured with three different cameras A, B and C. (NE=Neutrophil, LY=Lymphocyte, MO=Monocyte, EO=Eosinophil, BA=Basophil, IG=Immature granulocytes, NRBC=Nucleated red blood cells, GP=Giant platelets, PC=Platelet clumps).}
\centering
\scalebox{0.96}{
  \begin{tabular}{c|ccccccccccc|c}
    \toprule
         \textbf{Camera} & \textbf{NE} & \textbf{LY} & \textbf{MO} & \textbf{EO} & \textbf{BA} & \textbf{IG} & \textbf{Atypical} & \textbf{NRBC} & \textbf{GP} & \textbf{PC} & \textbf{Artefact} & \textbf{\scalebox{1.4}{$\rm \sfrac{\text{Train}}{\text{Test}}$}}
        \\
    \midrule
    A&3,885&1,507&2,224&2,076&65&863&984&651&486&138&2,550&\scalebox{1.4}{$\rm \sfrac{\text{10,849}}{\text{4,580}}$}\\
    B&2,045&1,840&612&373&67&1,073&2,257&97&918&796&1,437&\scalebox{1.4}{$\rm \sfrac{\text{7,997}}{\text{3,518}}$}\\
    C&85&43&144&85&12&323&861&321&303&11&16&\scalebox{1.4}{$\rm \sfrac{\text{1,548}}{\text{656}}$}\\
    \bottomrule
  \end{tabular}
  }
  \label{table:datasets}
\end{table*}
\egroup

\section {Implementation Details}
\subsection{Training of the VAE}
The Encoder $g_{\phi}$ and Decoder $h_{\theta}$ network architectures for the VAE are shown in Figure \ref{fig:vaetrain}.
We use Sobel Edge operator for Edge concatenation. Edges of the input image are concatenated with the output of \textit{tanh} nonlinearity as shown in Figure \ref{fig:vaetrain}. The VAE is trained using (i) the Mean squared error reconstruction loss $\mathcal{L}_{r}$ between the real and VAE reconstructed images and (ii) the perceptual loss $\mathcal{L}_{p}$ for which the features are taken from the $l^{\text{th}}$ layer of the VGG-16 ($10^{\text{th}}$ layer) or RestNet-50 ($38^{\text{th}}$ layer) classifier trained on source images for WBC classification task.  The hidden layers of Encoder and Decoder networks use Leaky ReLU and \textit{tanh} as activation functions with the dimensionality of the latent space being 64. VAE is trained using a standard gradient descent procedure with RMSprop optimizer.

\subsection{Inference through Latent Search}
Once the VAE is trained, given an image $\tilde{\mathbf{x}}_\mathcal{T}$ from the target distribution,
the Latent Search algorithm searches for an optimal latent vector $\tilde{\mathbf{z}}_\mathcal{S}$ that generates its `closest-clone' $\tilde{\mathbf{x}}_\mathcal{S}$ from $\mathcal{P}_S$. The search is performed by minimizing the SSIM loss $ \mathcal{L}_{ssim}$ between the input target image $\tilde{\mathbf{x}}_\mathcal{T}$ and VAE reconstructed target image. The latent vector is optimized using a gradient-based optimization procedure, performed for $K$ (a hyper-parameter) iterations over the latent space of the VAE for every target image. The gradient based optimization is implemented with Nesterov Accelerated Gradient method with a momentum of 0.5. Finally, the class for the input target image is assigned the same as the one given by the source classifier $C_{\psi}$ on $\tilde{\mathbf{x}}_\mathcal{S}$. $C_{\psi}$ is a VGG-16 or RestNet-50 classifier  trained  on  source images.  Note that our algorithm solves an optimization problem before predicting class for every input target image. However, since it involves only a forward-pass through a trained neural network (decoder $h_{\theta^{\ast}}$), the time taken is only of the order of few milliseconds on standard CPUs. \bb{The complete algorithmic steps and the architectural details for TIGDA are given in the supplementary material.}

\section{Dataset details}
The datasets used in this study will be described in this section. Peripheral blood smear (PBS) consists primarily of three cell types -- RBC (Red Blood Cell or erythrocyte), WBC (White Blood Cell or leukocyte) and platelet (or thrombocyte). Each of these primary classes have subclasses. The subclasses of WBCs are: neutrophil, lymphocyte, monocyte, eosinophil, basophil, immature granulocytes and atypical/blast cells. Apart from these, there are other types of cells and artefacts which can have appearance similar to leukocytes.
These are -- nucleated red blood cell (NRBC), large platelets, platelet clumps, and stain artefacts \cite{bloodbook}. In the current study, we consider classification of 11 categories of which seven are subtypes of WBCs and rest four are NRBC, large platelets, platelet clumps, and stain artefacts (images shown in Figure~\ref{fig:wbctypes}). 
\begin{table*}
\caption{ Accuracy (mean $\pm$ std\%) values for UDA tasks on WBC and related microscopic images captured with three different cameras A, B and C. X\textrightarrow Y indicates model trained on images from source Camera X and tested on images from target Camera Y. 
Results are reported as an average over five independent runs using various state-of-the-art UDA \bb{and Domain Randomization methods}. Note that while all UDA methods perform better than the source only model, TIGDA offers the best performance despite not using the target images.}
\begin{center}
\scalebox{0.74}{
\begin{tabular}{l|cccccc|ccccccc}
    \toprule
  
    \multicolumn{1}{c|}{} & \multicolumn{6}{c|}{ResNet-50} & \multicolumn{6}{c}{VGG-16}\\
    Models & {A\textrightarrow B} &  {A\textrightarrow C} & {B\textrightarrow A} & {B\textrightarrow C} & {C\textrightarrow A} & {C\textrightarrow B} & {A\textrightarrow B} &  {A\textrightarrow C} & {B\textrightarrow A} & {B\textrightarrow C} & {C\textrightarrow A} & {C\textrightarrow B} \\
    \midrule
    {Source Only}&42.7$\pm$0.5&51.3$\pm$0.4&35.8$\pm$0.6&46.2$\pm$0.2&22.8$\pm$0.6&26.9$\pm$0.4&37.4$\pm$0.5&47.6$\pm$0.4&31.2$\pm$0.3&40.1$\pm$0.5&17.6$\pm$0.6&22.7$\pm$0.2\\
    
   {DR1~\cite{mahmood2018deep}}&52.5$\pm$0.3&57.7$\pm$0.1&43.6$\pm$0.2&51.7$\pm$0.4&34.5$\pm$0.3&36.2$\pm$0.2&44.6$\pm$0.1&50.9$\pm$0.2&38.2$\pm$0.4&46.5$\pm$0.3&27.3$\pm$0.3&30.8$\pm$0.2\\
    
    {DR2~\cite{toth2018training}}&60.3$\pm$0.2&65.4$\pm$0.3&55.9$\pm$0.2&64.2$\pm$0.4&44.6$\pm$0.3&49.8$\pm$0.4&54.1$\pm$0.1&59.6$\pm$0.2&48.7$\pm$0.1&60.5$\pm$0.4&41.3$\pm$0.3&45.2$\pm$0.1\\
    
    {DR3~\cite{zakharov2019deceptionnet}}&50.4$\pm$0.2&53.4$\pm$0.4&40.5$\pm$0.2&49.8$\pm$0.3&29.5$\pm$0.3&32.7$\pm$0.4&41.8$\pm$0.3&47.5$\pm$0.2&35.9$\pm$0.1&42.1$\pm$0.2&23.6$\pm$0.2&28.3$\pm$0.3\\

    {ADDA~\cite{tzeng2017adversarial}}&43.5$\pm$0.1&52.7$\pm$0.2&37.3$\pm$0.1&48.1$\pm$0.5&24.9$\pm$0.4&29.1$\pm$0.5&39.3$\pm$0.2&50.1$\pm$0.3&33.6$\pm$0.4&43.3$\pm$0.2&19.8$\pm$0.4&25.2$\pm$0.5\\
   
  
   {GTA~\cite{sankaranarayanan2018generate}}&56.2$\pm$0.4&66.3$\pm$0.5&48.1$\pm$0.2&56.7$\pm$0.6&35.5$\pm$0.4&37.8$\pm$0.1&52.6$\pm$0.7&62.1$\pm$0.3&41.9$\pm$0.6&50.7$\pm$0.3&30.1$\pm$0.1&33.7$\pm$0.6\\
   

   {TAT~\cite{liu2019transferable}}&65.8$\pm$0.5&70.5$\pm$0.4&54.8$\pm$0.3&63.1$\pm$0.7&44.7$\pm$0.2&48.2$\pm$0.3&61.7$\pm$0.5&67.3$\pm$0.4&50.6$\pm$0.4&58.3$\pm$0.6&40.3$\pm$0.1&42.5$\pm$0.1\\

   {DIRT-T~\cite{shu2018dirt}}&55.7$\pm$0.5&65.1$\pm$0.6&49.2$\pm$0.2&55.4$\pm$0.3&34.2$\pm$0.3&37.5$\pm$0.4&53.1$\pm$0.8&61.9$\pm$0.7&40.7$\pm$0.5&50.3$\pm$0.5&31.3$\pm$0.4&32.9$\pm$0.7\\

    {DAL~\cite{peng2019domain}}&64.7$\pm$0.2&69.4$\pm$0.3&56.3$\pm$0.2&62.7$\pm$0.4&43.5$\pm$0.1&47.5$\pm$0.5&60.8$\pm$0.2&66.5$\pm$0.5&51.8$\pm$0.4&59.1$\pm$0.3&39.7$\pm$0.1&41.1$\pm$0.2\\
    

    {CyCADA~\cite{hoffman2017cycada}}&67.2$\pm$0.5&73.7$\pm$0.1&58.2$\pm$0.2&64.5$\pm$0.6&48.4$\pm$0.4&50.2$\pm$0.3&62.3$\pm$0.3&70.2$\pm$0.2&53.4$\pm$0.4&59.7$\pm$0.2&42.6$\pm$0.6&43.9$\pm$0.7\\
   

   {PixelDA~\cite{bousmalis2017unsupervised}}&65.9$\pm$0.2&71.8$\pm$0.7&59.1$\pm$0.8&66.2$\pm$0.5&47.8$\pm$0.4&50.6$\pm$0.5&61.5$\pm$0.3&68.4$\pm$0.4&54.6$\pm$0.7&58.8$\pm$0.6&41.3$\pm$0.6&42.5$\pm$0.4\\

    {SBADA-GAN~\cite{russo2018source}}&66.3$\pm$0.2&70.5$\pm$0.2&60.3$\pm$0.3&65.6$\pm$0.4&46.4$\pm$0.7&51.1$\pm$0.1&62.7$\pm$0.6&67.9$\pm$0.8&53.8$\pm$0.7&58.7$\pm$0.2&42.7$\pm$0.4&44.6$\pm$0.7\\

    {I2IAdapt~\cite{murez2018image}}&64.4$\pm$0.6&68.7$\pm$0.5&61.2$\pm$0.3&65.4$\pm$0.4&45.2$\pm$0.1&49.7$\pm$0.6&63.9$\pm$0.8&65.1$\pm$0.1&52.5$\pm$0.7&55.6$\pm$0.4&43.8$\pm$0.8&45.3$\pm$0.3\\

    {JAN~\cite{long2017deep}}&49.6$\pm$0.2&58.2$\pm$0.5&43.3$\pm$0.2&54.7$\pm$0.4&30.2$\pm$0.7&35.4$\pm$0.8&43.5$\pm$0.6&54.2$\pm$0.4&39.1$\pm$0.3&47.5$\pm$0.3&26.3$\pm$0.4&31.4$\pm$0.6\\

   {MCD~\cite{saito2018maximum}}&55.4$\pm$0.4&67.1$\pm$0.8&49.2$\pm$0.7&55.8$\pm$0.6&36.1$\pm$0.2&39.2$\pm$0.5&50.9$\pm$0.7&63.2$\pm$0.4&42.3$\pm$0.3&50.4$\pm$0.5&31.9$\pm$0.8&34.8$\pm$0.5\\

  {CAN~\cite{kang2019contrastive}}&67.8$\pm$0.4&71.3$\pm$0.5&63.4$\pm$0.5&65.4$\pm$0.3&47.3$\pm$0.2&51.2$\pm$0.4&61.9$\pm$0.8&68.1$\pm$0.3&54.6$\pm$0.6&59.3$\pm$0.4&40.9$\pm$0.2&45.7$\pm$0.8\\
  
  {TIGDA (Ours)}&\textbf{76.2$\pm$0.3}&\textbf{80.1$\pm$0.4}&\textbf{72.3$\pm$0.5}&\textbf{74.8$\pm$0.6}&\textbf{53.5$\pm$0.4}&\textbf{56.2$\pm$0.3}&\textbf{71.8$\pm$0.5}&\textbf{76.7$\pm$0.2}&\textbf{63.2$\pm$0.5}&\textbf{68.6$\pm$0.7}&\textbf{50.8$\pm$0.2}&\textbf{55.1$\pm$0.4}\\
    
    \bottomrule
\end{tabular}
}
\end{center}
\label{tab:comp}
\end{table*} 
\begin{figure*}
\centering
    \subfloat[ADDA]{\includegraphics[width=0.16\linewidth,height=0.14\linewidth]{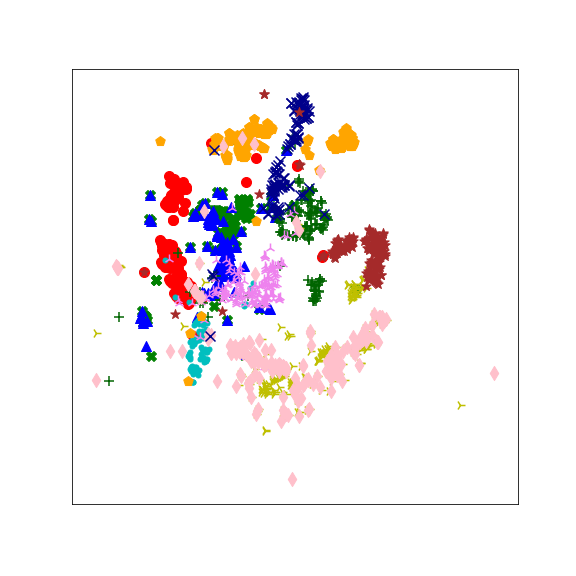}
        \label{fig:adda}}
     \subfloat[GTA]{\includegraphics[width=0.16\linewidth,height=0.14\linewidth]{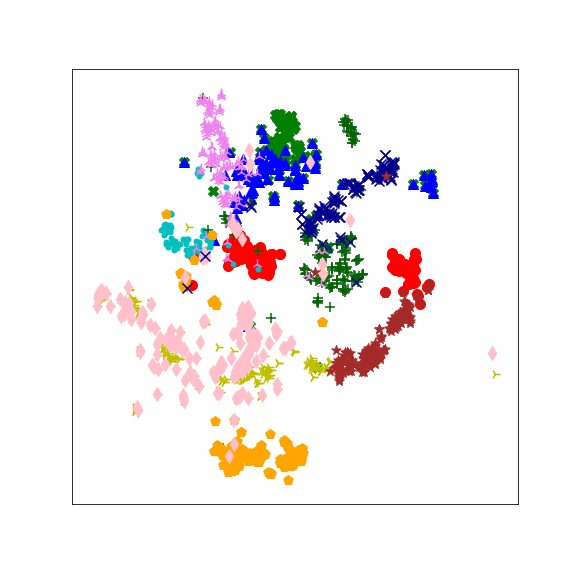}
      \label{fig:gta}}
         \subfloat[DIRT-T]{\includegraphics[width=0.16\linewidth,height=0.14\linewidth]{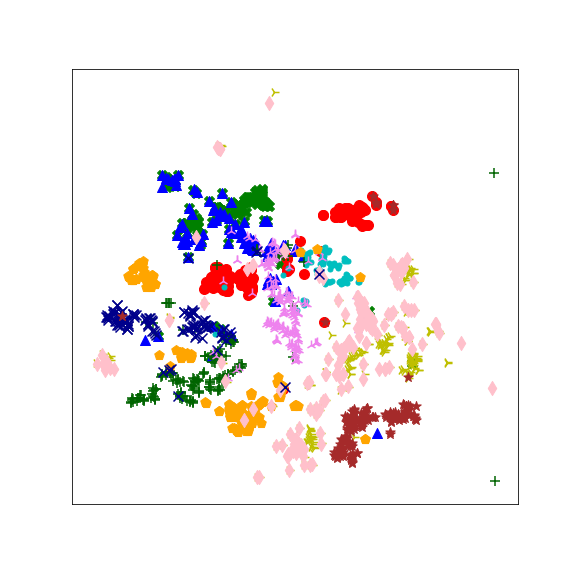}
      \label{fig:dirtt}}
    \subfloat[TAT]{\includegraphics[width=0.16\linewidth,height=0.14\linewidth]{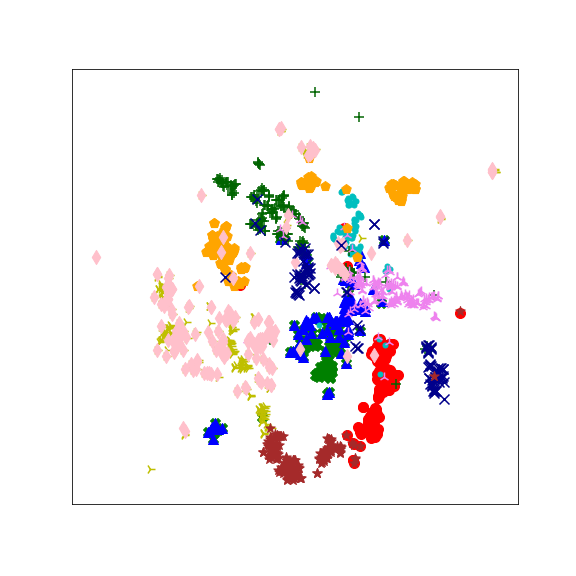}
      \label{fig:tat}} 
     \subfloat[DAL]{\includegraphics[width=0.16\linewidth,height=0.14\linewidth]{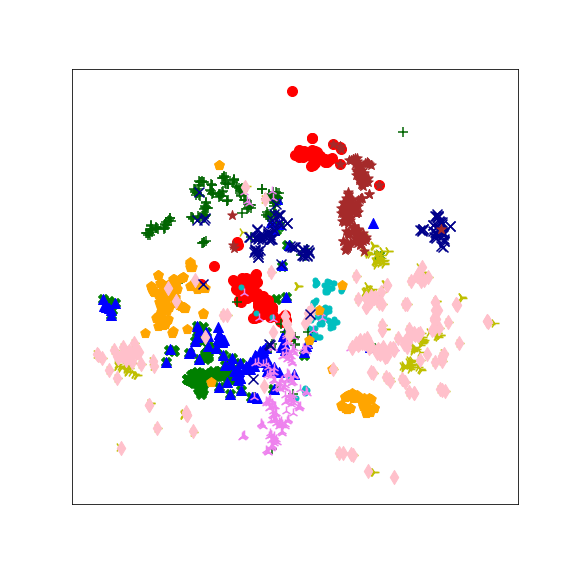}
      \label{fig:dal}} 
     \subfloat[TIGDA]{\includegraphics[width=0.16\linewidth,height=0.14\linewidth]{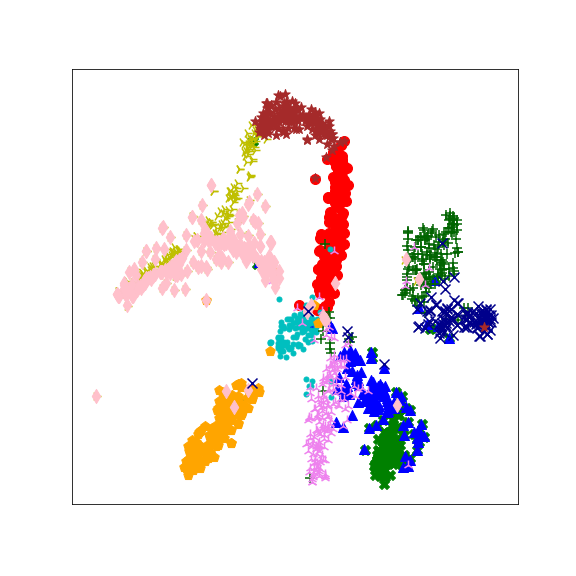}
      \label{fig:ours_tsne}} 
 \caption{\bb{t-SNE plots of features generated by ADDA~\cite{tzeng2017adversarial}, GTA~\cite{sankaranarayanan2018generate}, DIRT-T~\cite{shu2018dirt}, TAT~\cite{liu2019transferable}, DAL~\cite{peng2019domain} and TIGDA on domain adaptation task A\textrightarrow C. We used different markers and different colors to denote 11 categories. It is seen that TIGDA offers better clustering as compared to the rest.}}
\label{fig:tsne}
\end{figure*}\bb{Data used in our experiments comprises images from the PBS slides processed after complete de-identification to remove all the patient information, including age and gender. These were collected from two large clinical laboratories in Bangalore, India. The internal ethics committee of the respective laboratories approved the study. The samples were collected retrospectively 
without prospective patient recruitment.} \par\bb{The hardware consists of the following components,
(a) Optical system: Consists of an optical tube (40X or 100X Plan Achromat objective and 10X eyepiece) and Abbe Condenser with white LED source,
(b) Camera: The system is built such that either a mobile phone or a USB camera can be fitted on top of the eyepiece with a 3D printed attachment, aligning the optical axis of the tube/eyepiece with the camera,
(c) Hardware control: A small PCB designed to receive USB commands and drive motors and LED,
(d) XYZ slide stage: The XYZ platform is built using commercially available low-cost ball screws and stepper motors, along with some machined parts \cite{dastidar2020whole}.
The images used in this work are captured through 3 different cameras -- One cell phone make (iPhone 6s) and two brands of USB camera (from e-con systems~\cite{econ} and das-Cam \cite{dascam}). All cameras had resolution of at least 13MP with varying hardware and optical designs that induce the domain shift. For example, econ camera has an AR1335 CMOS image sensor and lens with 1/3.2'' optical format while das-Cam contains an OV13850 CMOS sensor with a lens of 1/3.06'' form factor.} \par Images are collected only from the `monolayer' region of the slides -- where the red blood cells are just touching each other. This is the area of the slide which is typically used for manual analysis~\cite{bloodbook}.
Slides prepared using varied staining types were used.
The images are of size approximately 13MP, with a spatial resolution of around 5.5 pixels per micron.
WBC and other similar looking cells (as described above) are localised in these images using a U-Net~\cite{unet} based technique described in~\cite{shonit}.
Each sample slides can potentially yield hundreds of unique WBC candidates. For annotation, we cropped $128\times128$ area around the WBCs identified by the extraction model.
These cells are then presented to three different certified medical professionals for annotating into different subtypes, using an in-house web based annotation tool.
There is usually a high degree (as high as 20\%) of inter-observer variability in the data annotation process.
Therefore, we use only those images where at least 2 out of 3 clinical pathologists agree on the class while the rest of the images are rejected. Table \ref{table:datasets} describes the summary of the datasets named as A, B and C corresponding to three cameras used. 
\section{Experiments and Results}
\subsection{Benchmarking Experiments}
\bb{In the first set of experiments, we benchmark performance of the baseline classifier with the following experiments: (a) Train and test on the same dataset type (A/B/C), (b) Train and test by combining images from all dataset types (A+B+C), (c) Train on one dataset and test on the other (all six combinations) with and without class balancing. The notation {X\textrightarrow Y} symbolizes training on a dataset X and testing on Y.}

\bb{Table \ref{table:sametasks} lists the results of experiment (a) which establishes an upper bound on the performance and (b) where it is seen that the performance degrades when all images from all three datasets are combined. This is due to the existence of domain shift between the datasets that makes learning difficult even with supervision. Moreover, combining datasets is not possible in the UDA setting where the labels are not known for the target data. Results of experiment (c) are shown in Table \ref{table:samesamples} where it is seen that the accuracy severely degrades when train and test sets are from different domains despite inducing an artificial class balance. The goal of UDA techniques is to improve the accuracies reported in Table \ref{table:samesamples}.} 
\begin{table}[hbt!]
\caption{\bb{Benchmarking A,B and C datasets using ResNet-50 classifier with different train and test sets. It is seen that combining all datasets makes learning difficult because of domain shift.}}
\centering
\scalebox{0.79}{
 \color{black} \begin{tabular}{ccccc}
    \toprule
         Measure & {A\textrightarrow A} & {B\textrightarrow B} & {C\textrightarrow C} & {(A+B+C)\textrightarrow (A+B+C)}
        \\
    \midrule
     Train Acc. &98.6$\pm$0.1&99.3$\pm$0.2&100.0$\pm$0.0&98.7$\pm$0.2\\
    Test Acc. &95.2$\pm$0.2&94.0$\pm$0.3&92.5$\pm$0.1&84.4$\pm$0.3\\
    \bottomrule
  \end{tabular}
  }
  \label{table:sametasks}
\end{table}
\setlength{\textfloatsep}{0pt}
\begin{table}[hbt!]
\caption{\bb{Accuracy on Resnet-50 classifiers for different Adaptation tasks. In the second row, all the three datasets are made to have same size by randomly subsampling the datasets}.}
\centering
\scalebox{0.67}{
  \color{black}\begin{tabular}{ccccccc}
    \toprule
         Measure & {A\textrightarrow B} &   {A\textrightarrow C} & {B\textrightarrow A}  & {B\textrightarrow C} & {C\textrightarrow A} & {C\textrightarrow B}
        \\
    \midrule
   W/o Balance&  42.7$\pm$0.5&51.3$\pm$0.4&35.8$\pm$0.6&46.2$\pm$0.2&22.8$\pm$0.6&26.9$\pm$0.4\\
   With Balance &  40.4$\pm$0.1&36.2$\pm$0.4&38.9$\pm$0.2&30.5$\pm$0.2&24.5$\pm$0.4&28.2$\pm$0.3\\
    
    \bottomrule
  \end{tabular}
  }
  \label{table:samesamples}
\end{table}
\subsection{Baseline Experiments}
\bb{The first set of task is of classification across 11 classes with classifiers trained on one (source) dataset and tested on another (target) dataset. We report average classification accuracies with standard-deviation (averaged over five independent runs) with two backbone architectures for the source classifier: ResNet-50 and VGG-16.} \bb{For all the UDA tasks, the VAE is trained with the entire source data and tested on the entire target data.} \bb{Table \ref{tab:comp} compares the performance of TIGDA with 12 SOTA UDA baselines, along with the accuracy without any UDA (called Source Only). It is seen that although all the UDA methods improve upon the Source Only performance, TIGDA offers the best performance despite not using any data from the target distribution. The confusion matrix for a few methods is given in the Figure 2 of the Supplementary material.  We also compare with three Domain Randomization (DR) techniques, DR1 \cite{mahmood2018deep}, DR2 \cite{toth2018training} and DR3 \cite{zakharov2019deceptionnet}. While DR provides performance boost, they have poorer performance as compared to TIGDA. This is because DR methods typically work well when the unseen target is within the scope of the class of random perturbations that are made on the source} \bb{which is not the case always. In TIGDA on the other hand, every target image is made to resemble the source image through implicit sampling. Since VAE learns to sample from the entire source domain, the domain shift is implicitly reduced during inference without explicitly assuming any form for the shift.}
\bb{It is also observed that the performance of the classifier when trained and tested on single source domain (around 92-95\% for all the datasets) do not degrade with TIGDA.} 
\setlength{\textfloatsep}{0pt}
\subsubsection{t-SNE}
\bb{To further examine our hypothesis, in Figure \ref{fig:tsne} we depict the t-SNE~\cite{maaten2008visualizing} 
plots of features generated by adversarial based UDA methods (ADDA~\cite{tzeng2017adversarial}, GTA~\cite{sankaranarayanan2018generate}, DIRT-T~\cite{shu2018dirt}, TAT~\cite{liu2019transferable} and DAL~\cite{peng2019domain}) for the domain adaptation task A\textrightarrow C. For TIGDA, we plot the embeddings of the latent variable $\tilde{\rvz}_\mathcal{S}$ obtained through the LS on the target images. It is seen that the representation generated by the LS of TIGDA is more separated compared to those generated by adversarial training based UDA methods.} \bb{A similar observation is made on the first two principal component plots of the latent representations (Please refer to Figure 1 in supplementary material).}
\setlength{\textfloatsep}{0pt}
\begin{figure}
\includegraphics[width=.48\textwidth,height=.20\textwidth]{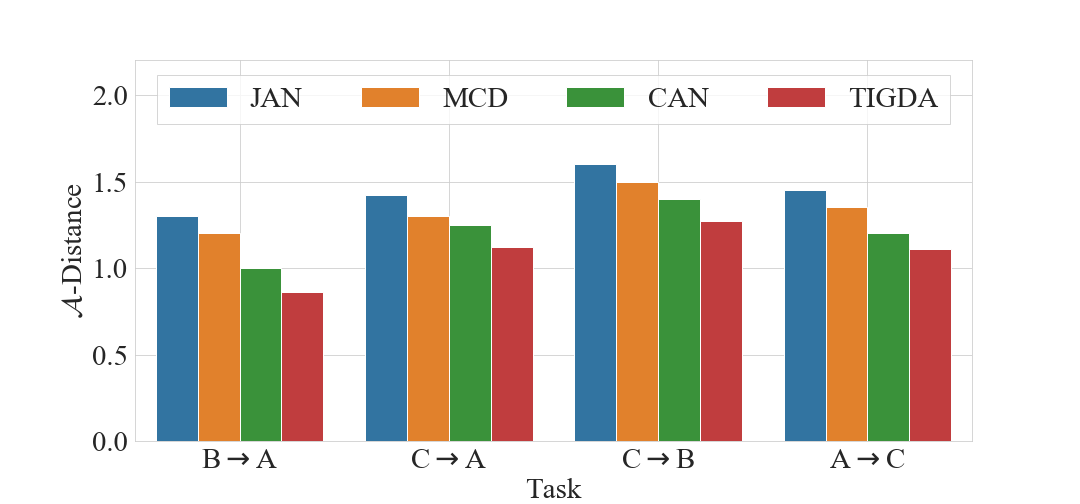}
\caption{$\mathcal{A}$-Distance (lower is better) of JAN~\cite{long2017deep}, MCD~\cite{saito2018maximum}, CAN~\cite{kang2019contrastive} and TIGDA.}
\label{fig:adist}
\end{figure}

\setlength{\textfloatsep}{0pt}
\begin{figure}
\centering
\includegraphics[width=0.48\textwidth,height=.350\textwidth]{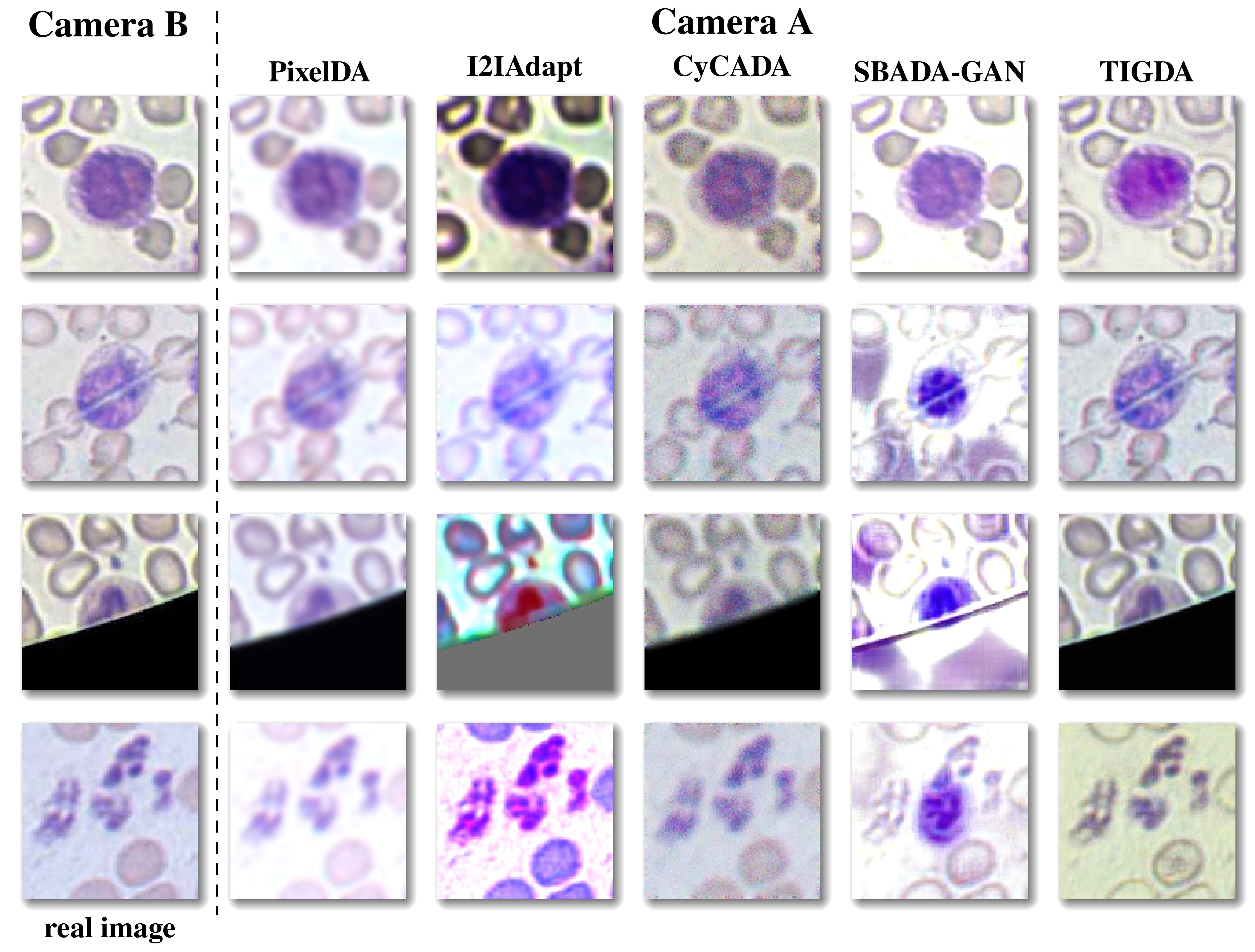}
\caption{Translation of images from one domain (Camera B) to other (Camera A) using reconstruction based domain adaptation methods: PixelDA~\cite{bousmalis2017unsupervised}, I2IAdapt~\cite{murez2018image}, CyCADA~\cite{hoffman2017cycada}, SBADA-GAN~\cite{russo2018source}. In TIGDA, we depict the `closest-clones' of Camera B (target) images in the Camera A (source) domain. It is seen that TIGDA preserves the edges, perceptual quality and structural details in the generated clones. }
\label{fig:recons_comp}
\end{figure}
\setlength{\textfloatsep}{0pt}
\begin{figure}
\includegraphics[width=0.488\textwidth,height=.265\textwidth]{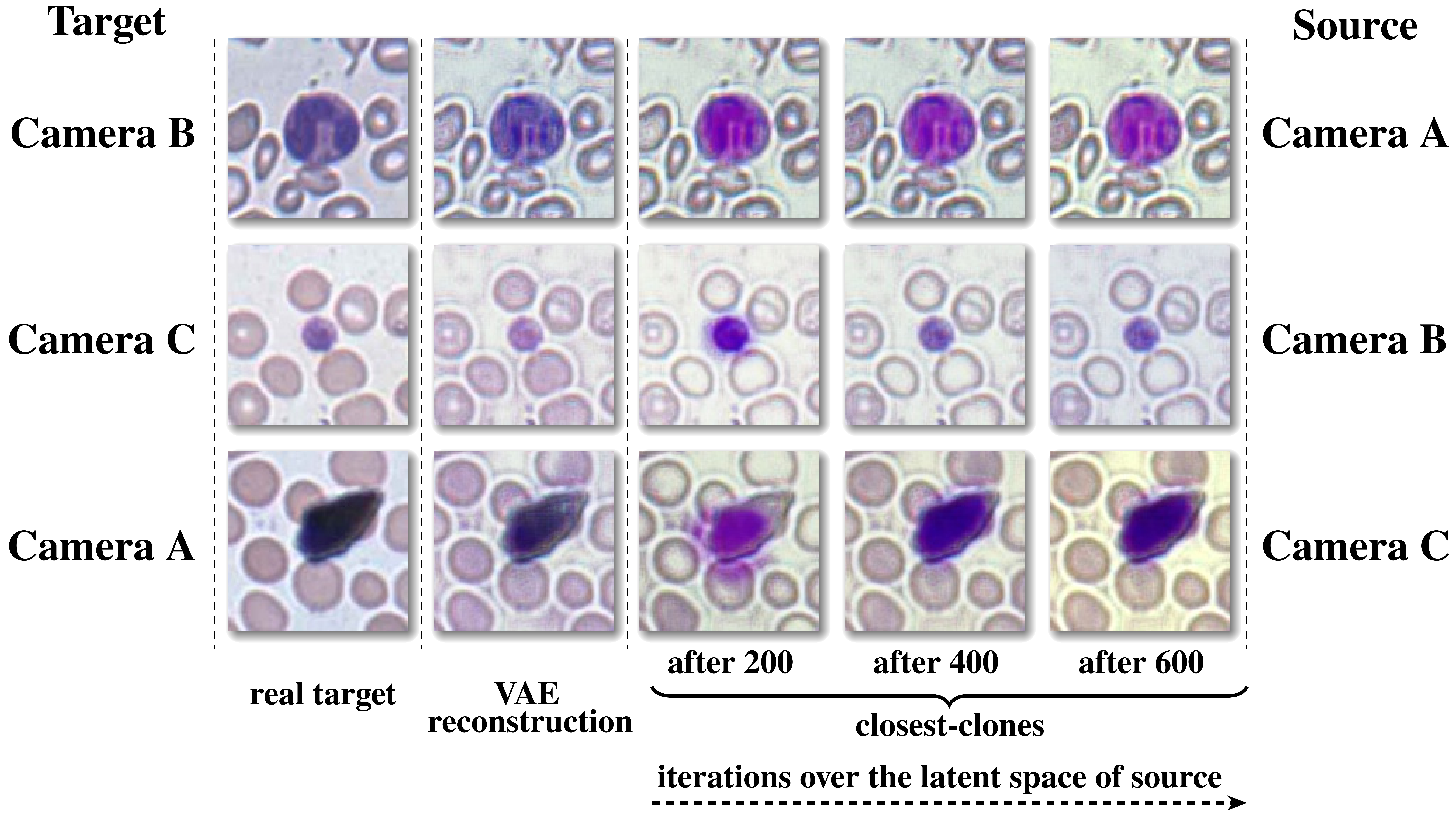}
\caption{Illustration of Latent Search in TIDGA. VAE reconstructs images prior to LS. The closest-clones obtained after every 200 iterations are shown. A transformation is observed from the target to the source domain as the LS progresses.}
\label{fig:transform}
\end{figure}
\setlength{\textfloatsep}{0pt}
\subsubsection{$\mathcal{A}$-Distance}
\bb{To ascertain the closeness of the `closest-clones' obtained through the LS, to the source distribution, we compute the $\mathcal{A}$-distance~\cite{ben2007analysis}, which is a measure of similarity between two probability distributions. Similar feature distributions will have lower $\mathcal{A}$-distance between them as compared to dissimilar feature distributions. $\mathcal{A}$-distance is given by $\hat d_A = 2(1 - 2\epsilon)$ where $\epsilon$ is the generalization error  of a linear SVM classifier trained to discriminate between the source and target domains.} \bb{Figure \ref{fig:adist} displays $\hat d_A$ for the four domain adaptation tasks with JAN~\cite{long2017deep} features, MCD~\cite{saito2018maximum} features and CAN~\cite{kang2019contrastive} features, respectively. In our case, $\hat d_A$ is measured between the latent vectors (produced by the Encoder of the VAE) of the source images and the latent vectors of the `closest-clones' for target images obtained from Latent Search. We  observe that $\hat d_A$ is smallest in our case as compared to other methods for all the tasks. This implies that the features obtained using TIGDA are transferable between the source and target domains, aiding better adaptation.}
\subsubsection{Qualitative examination}
\bb{To qualitatively examine the performance of the reconstruction-based methods, we plot the transformed target samples from (source) Camera B to (target) Camera A for different methods as shown in Figure \ref{fig:recons_comp}. It is seen that I2IAdapt~\cite{murez2018image} and SBADA-GAN~\cite{russo2018source} are not able to capture fine subtleties of partially visible White Blood Cells in microscopic images that results in poor performance. PixelDA~\cite{bousmalis2017unsupervised} and CyCADA~\cite{hoffman2017cycada} result in blurry images while TIGDA generated images are better where it is seen that the subtleties like edge information are well-preserved. In summary, we have demonstrated that TIGDA achieves better performance over the SOTA adversarial, divergence and reconstruction based UDA methods without any requirement for target images.}
\bb{\subsubsection{One-shot learning}
Even though TIGDA does not utilize the target data during training, target image is used for LS during inference. Therefore, we also compare TIGDA with SOTA one-shot learning techniques in Table \ref{tab:oneshot}. In one-shot learning methods, a single target image is used during training for adaptation. It is seen that TIGDA outperforms such techniques. This is because, in one-shot learning methods, the target image that is used for training is fixed which restricts the learnability. However in TIGDA, no target image is used during training but a fresh latent search is conducted on each input target image during inference.}
\setlength{\textfloatsep}{0pt}
\begin{table}
\caption{\bb{Comparison of TIGDA with One-shot learning methods.}}
\begin{center}
\color{black}\begin{tabular}{l|c|c}
    \toprule
    Method  & {A\textrightarrow B} &  {C\textrightarrow B} \\
    \midrule
    ProtoNet~\cite{chen2019closer}&61.9$\pm$0.1&49.6$\pm$0.3\\
     MatchingNet~\cite{chen2019closer}&57.6$\pm$0.2&43.7$\pm$0.1\\
     DAPN~\cite{zhao2020domain}&68.9$\pm$0.2&51.9$\pm$0.2\\
     DN4~\cite{li2019revisiting}&55.4$\pm$0.1&44.6$\pm$0.2\\
      FADA~\cite{motiian2017few}&60.6$\pm$0.3&45.9$\pm$0.3\\
      {TIGDA (Ours)}&76.2$\pm$0.3&56.2$\pm$0.3\\
\bottomrule
\end{tabular}
\end{center}
\label{tab:oneshot}
\end{table} 
\setlength{\textfloatsep}{0pt}
\subsection{Ablation studies}
\bb{To examine the contributions made by each of the proposed components, we conduct several ablation experiments} \bb{on TIGDA in this section.} 
\subsubsection{Effect of number of iterations on LS}
\bb{The inference of TIGDA involves a gradient-based optimization through the decoder network $h_{\theta^{\ast}}$ to generate the `closest-clone' for a given target image. In Figure \ref{fig:transform},  we show the transformation of a few target images after every 200 iterations. It can be seen that as the number of iterations increase, the target images change their characteristics to move towards the source distribution.}
\setlength{\textfloatsep}{0pt}
\begin{figure}[hbt!]
       \subfloat[Inference on camera C microscopic images when the model is trained on camera A images.]{\includegraphics[width=0.470\linewidth,height=0.41\linewidth]{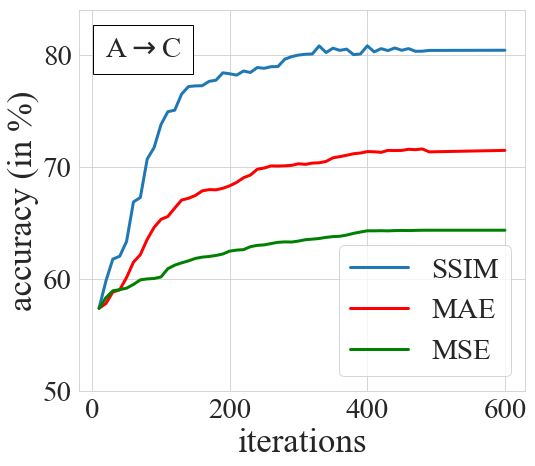}
        \label{fig:alossacc}}
    \hfill
    \hfill
       \subfloat[Inference on camera C microscopic images when the model is trained on camera B images.]{\includegraphics[width=0.470\linewidth,height=0.41\linewidth]{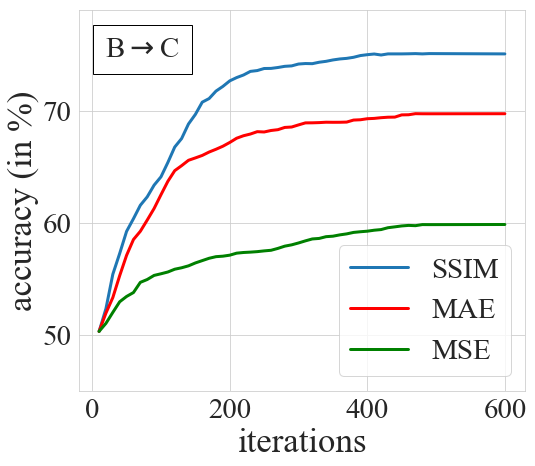}
        \label{fig:blossacc}}
    \caption{Performance of gradient-based Latent Search during inference on target microscopic images for two domain adaptation tasks using different objective functions; MSE=Mean Squared Error, MAE=Mean Absolute Error, SSIM=Structural Similarity Index. It is seen that the loss saturates around 500-600 iterations.}
\label{fig:lossacc}
\end{figure}
\bb{Quantitatively, we plot the accuracy as a function of number} of iterations in Figure \ref{fig:lossacc} where it is seen that it saturates around 500-600 iterations. We thus used 600 iterations in all the previous experiments in Table \ref{tab:comp}. 
\subsubsection{Effect of the Edge concatenation}
As described earlier, the edge-map of the input image is concatenated with one of the layers of decoder both while training and inference. Figure \ref{fig:wec} shows the quality of image generated after Latent Search when the model was trained without edge concatenation (wEc). It can be observed that edge information of the nucleus and surrounding cells is lost resulting in a blurry image. Further, the accuracy drops to 57.6\% if edge concatenation is removed from VAE for the task A\textrightarrow B as evident from Table \ref{tab:ablation}, whereas the accuracy for TIGDA is 76.2\% for the same task. Similarly, the accuracy drops to 60.3\% for the task B\textrightarrow C without edge concatenation while it is 74.8\% for TIGDA.
\setlength{\textfloatsep}{0pt}
\begin{figure}[hbt!]
\centering
\scalebox{.99}{
    \subfloat[real target]{\frame{\includegraphics[width=0.18\linewidth,height=0.18\linewidth]{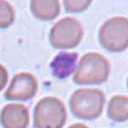}}
        \label{fig:ab_realt}}
        \hfill
         \hfill
         \subfloat[wEc]{\frame{\includegraphics[width=0.18\linewidth,height=0.18\linewidth]{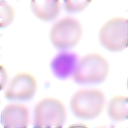}}
      \label{fig:wec}}
      \hfill
       \hfill
         \subfloat[wPl]{\frame{\includegraphics[width=0.18\linewidth,height=0.18\linewidth]{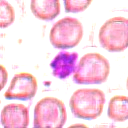}}
      \label{fig:wpl}}
      \hfill
       \hfill
    \subfloat[wLS]{\frame{\includegraphics[width=0.18\linewidth,height=0.18\linewidth]{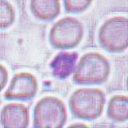}}
      \label{fig:wls}} 
      \hfill
       \hfill
     \subfloat[TIGDA]{\frame{\includegraphics[width=0.18\linewidth,height=0.18\linewidth]{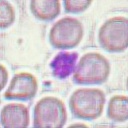}}
      \label{fig:ab_ours}} 
      }
\caption{Ablation of TIGDA for task C\textrightarrow A. 
(wEc=without Edge concatenation, wPl=without Perceptual loss, wLS=without Latent Search). The best source-like features are observed in the image with all the components of TIGDA.}
\label{fig:ablation_imgs}
\end{figure}
\setlength{\textfloatsep}{0pt}
\subsubsection{Effect of Perceptual loss $\mathcal{L}_{p}$} We have used a perceptual model $P_{\psi}$ trained on source samples while training the VAE.
Perceptual loss minimizes the Euclidean distance between the (perceptual) feature vectors of input and reconstructed source images. 
It measures image similarities more robustly than per-pixel losses (e.g., Mean squared error). It ensures that the VAE reconstructed image is semantically similar to the input. We can observe from Figure \ref{fig:wpl} that VAE reconstructed image without perceptual loss (wPl) during training, has different color and texture patterns from the real target image shown in Figure \ref{fig:ab_realt}. The finer background details are missing in Figure \ref{fig:wpl}. Such images will result in a poor latent space and the performance on target images will drop during inference. Table \ref{tab:ablation} shows that the accuracy drops to 53.4\% for the task A\textrightarrow B without perceptual loss while it is 76.2\% for TIGDA that uses perceptual loss during training. Similarly the accuracy drops to 57.1\% for the task B\textrightarrow C when perceptual loss was not employed during training but the accuracy on the same task is 74.8\% with perceptual loss.
\setlength{\textfloatsep}{0pt}
\begin{table}
\caption{Ablation of different components of TIGDA during training and inference; Edge, perceptual loss $\mathcal{L}_{p}$ and Latent Search (LS). Accuracy (mean $\pm$ std\%) values are reported as an average over five independent runs for two tasks.}
\begin{center}
\scalebox{1.0}{
\begin{tabular}{ccc|c|c}
    \toprule
    Edge & $\mathcal{L}_{p}$ & LS & {A\textrightarrow B} &  {B\textrightarrow C} \\
    \midrule
    &&&35.8$\pm$0.2&39.5$\pm$0.1\\
    \checkmark&&&39.7$\pm$0.4&42.2$\pm$0.3\\
    &\checkmark&&38.9$\pm$0.5&43.4$\pm$0.3\\
    &&\checkmark&50.2$\pm$0.3&52.8$\pm$0.2\\
    \checkmark&\checkmark&&43.7$\pm$0.2&46.9$\pm$0.5\\
    &\checkmark&\checkmark&57.6$\pm$0.4&60.3$\pm$0.2\\
    \checkmark&&\checkmark&53.4$\pm$0.3&57.1$\pm$0.4\\
    \checkmark&\checkmark&\checkmark&76.2$\pm$0.3&74.8$\pm$0.6\\
\bottomrule
\end{tabular}
}
\end{center}
\label{tab:ablation}
\end{table} 
\setlength{\textfloatsep}{0pt}
\subsubsection{Effect of Latent Search and other Loss functions}
To validate the importance of the Latent Search procedure, in Figure \ref{fig:wls} we show the VAE reconstructed images  without Latent Search for the target image shown in Figure \ref{fig:ab_realt}. 
Figure \ref{fig:ab_ours} shows the generated image after Latent Search for the task C\textrightarrow A. It is observed (empirically) that the `closest-clone' obtained through TIGDA shown in Figure \ref{fig:ab_ours} is visually more closer to the source domain  as compared to VAE reconstructed image shown in Figure \ref{fig:wls}. When no  Latent Search is employed, the accuracy for the tasks A\textrightarrow B and B\textrightarrow C drops to 43.7\% and 46.9\% respectively as shown in Table \ref{tab:ablation}. 
\setlength{\textfloatsep}{0pt}
\begin{figure*}
\centering
     \subfloat[Accu. vs. SSIM window sizes.]{\includegraphics[width=0.2434\linewidth,height=0.165\linewidth]{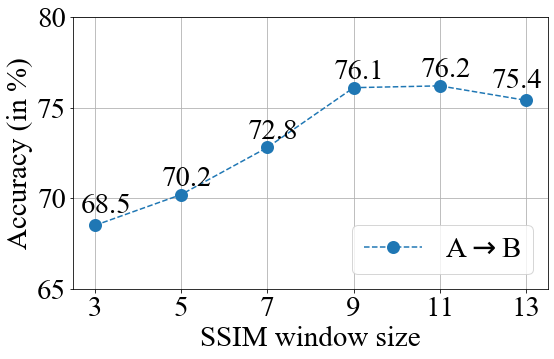}
      \label{fig:window}}
         \subfloat[Accu. vs. Edge concat. position.]{\includegraphics[width=0.2434\linewidth,height=0.165\linewidth]{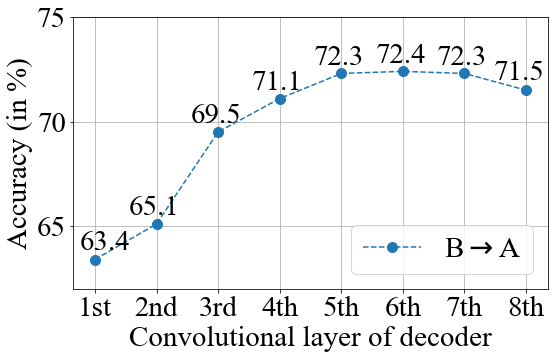}
      \label{fig:dge_sweep}}
    \subfloat[Accuracy vs Skip connections. ]{\includegraphics[width=0.2434\linewidth,height=0.165\linewidth]{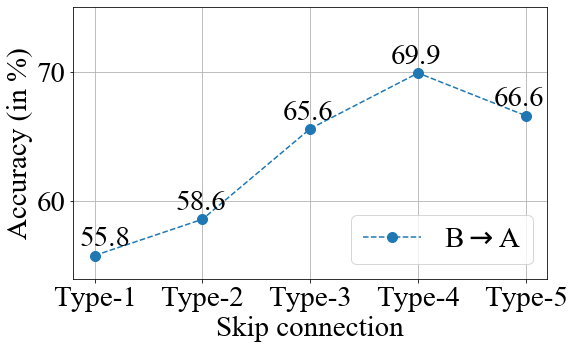}
      \label{fig:skip}} 
      \subfloat[FID vs. no. of training images.]{\includegraphics[width=0.2434\linewidth,height=0.165\linewidth]{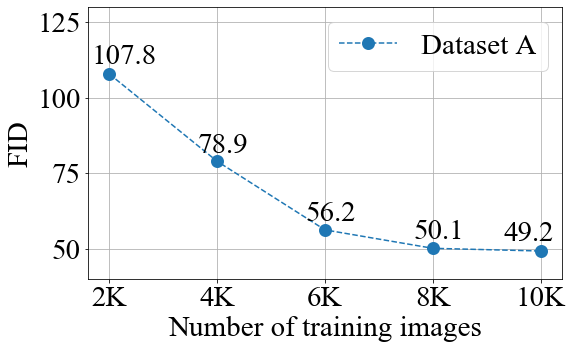}
        \label{fig:fid}}
 \caption{\bb{(a) Accuracy of TIGDA on task A\textrightarrow B by selecting different window sizes in SSIM during Latent Search (b) Performance of TIGDA when the edges of input images are concatenated with different convolutional layers in decoder $h_{\theta}$ (c) Performance of TIGDA when edge concatenation is replaced with different types of skip connections between encoder $g_{\phi}$ and decoder $h_{\theta}$ layers. Window size of 11 gives the best performance. For the same task, edge concatenation is better than skip connections. (d) FID of VAE generated images when TIGDA is trained on dataset A with different number of images ranging from 2,000 (2K) to 10,000 (10K). }}
\label{fig:f_w_e_s}
\end{figure*}
To affirm the usefulness of the choice of SSIM as loss for the Latent Search, we implemented Latent Search with three different losses, Mean Squared Error (MSE), Mean Absolute Error (MAE) and Structural Similarity Index (SSIM) loss and found that SSIM loss is the best performing among the three. SSIM loss compares pixels and their corresponding neighborhoods in two images, preserving the luminance, contrast and structure information. On the other hand, MSE or MAE measures only the absolute pixel differences rather than the structural differences. Figure \ref{fig:alossacc} and \ref{fig:blossacc} depict the outcome of these ablation studies where the superiority of the SSIM loss  is seen over MSE and MAE for the tasks A\textrightarrow C and B\textrightarrow C respectively. Table \ref{tab:ablation} summarizes all the ablation studies conducted  on two domain adaptation tasks with different combinations of the components. It can be noted that the best performance is observed by utilizing all the three components: Edge concatenation, perceptual loss and Latent Search procedure. Thus, with all the aforementioned studies, we have demonstrated the utility of all the individual components used in TIGDA for UDA task on WBC classification.  

\subsubsection{Effect of other hyperparameters} \bb{In this section, we study the effect of four hyperparameters: (a) the window size for the SSIM loss used for Latent Search, (b) the position of the Edge-operator in the decoder network, (c) use of Skip connection as in \cite{unet} instead of edge concatenation, (d) number of source samples required to generate high-fidelity images using VAE. Figure \ref{fig:f_w_e_s}(a) depicts the change in the performance for A\textrightarrow B with varying window sizes of SSIM. While the performance varies with different window sizes, the best accuracy is observed with the default choice of 11 that is used in all our experiments.}\par \bb{Next, in Figure \ref{fig:f_w_e_s}(b), we vary the layer of the decoder to concatenate edges. It is seen that the performance is best at the penultimate layers since the edges are used only to reduce the blurriness of the generated image that occurs near the last few layers of the decoder. Providing the edge information at initial layers of the decoder, regularizes more than required, thus degrading the quality of the generated image.} \par \bb{To further quantify the effect of edge concatenation as a regularizer, we replace it with another type of spatial contiguity in the form of skip connections as in a segmentation network such as UNet \cite{unet}.  We have used five different types of skip connections. Type-1 refers to no skip connection. Type-2 connects FC1 layer (Refer to Supplementary material for the names of the layers in the architecture) of the encoder with FC2 layer of the decoder network. Type-3 connects all the layers in the encoder with layers of corresponding dimensions in the decoder (like a U-Net). Type-4 connects Conv1 layer in the encoder with Conv9 layer of the decoder. Type-5 is combination of Type-2 and Type-4 skip connections. We observe in Figure \ref{fig:f_w_e_s}(c) that having skip connection is better than not having it since it regularizes the network. Further, Type-4, that connects the initial layers of the encoder with final layers of the decoder, has the best performance. This can be explained by the fact that initial layers of the CNNs are known to extract edge-like features which is shown to enhance the performance in the given task. Connecting more layers as in Type-3 and Type-5 leads to over regularization and degrades the performance. However, explicit edge concatenation still provides the best performance.} \par \bb{In the final plot Figure \ref{fig:f_w_e_s}(d), we report the Fréchet Inception Distance (FID)~\cite{heusel2017gans}, that quantifies the quality of the generated data (lower the better) for any generative model, as a function of the number of source samples used to train the VAE. It is seen that with the increase in number of images for training VAE, the quality of generated images improve as shown by the FID values. Therefore, with about 10K samples, one can expect the VAE to sample high-fidelity source images.}

\begin{table*}[t]
\caption{ \bb{Accuracy (mean $\pm$ std\%) values for UDA tasks on Office-31 and Imaging Flow Cytometry (Cyto.) and grayscale Peripheral Blood Smear (gray-PBS) White Blood Cell datasets. 
Results are reported as an average over five independent runs using various SOTA UDA methods using ResNet-50 classifier. Note that while all UDA methods perform better than the source only model, TIGDA offers significant performance enhancement despite not using the target images during training.}}
 \begin{center}
\color{black}\scalebox{0.8}{
\begin{tabular}{l|ccccccc|ccc}
    \toprule
  
    \multicolumn{1}{c|}{} & \multicolumn{7}{c|}{Office-31} & \multicolumn{3}{c}{WBC}\\
    Models & {A\textrightarrow W} &  {D\textrightarrow W} & {W\textrightarrow D} & {A\textrightarrow D} & {D\textrightarrow A} & {W\textrightarrow A} & {Avg} & {gray-PBS\textrightarrow Cyto.} &  {Cyto.\textrightarrow gray-PBS}  & {Avg}\\
    \midrule
    {Source Only}&68.4$\pm$0.2&96.7$\pm$0.1&99.3$\pm$0.1&68.9$\pm$0.2&62.5$\pm$0.3&60.7$\pm$0.3&76.1&42.6$\pm$0.1&22.2$\pm$0.2& 32.4\\

    {JAN~\cite{long2017deep}}&85.4$\pm$0.3&97.4$\pm$0.2&99.8$\pm$0.2&84.7$\pm$0.3&68.6$\pm$0.3&70.0$\pm$0.4&84.3&67.5$\pm$0.2&57.2$\pm$0.3&62.3\\
   
  
   {MADA~\cite{pei2018multi}}&90.0$\pm$0.2&97.4$\pm$0.1&99.6$\pm$0.1&87.8$\pm$0.2&70.3$\pm$0.4&66.3$\pm$0.1&85.2&73.3$\pm$0.2&61.8$\pm$0.3&67.5\\
   

   {SimNet~\cite{pinheiro2018unsupervised}}&88.6$\pm$0.5&98.2$\pm$0.2&99.7$\pm$0.2&85.3$\pm$0.3&73.4$\pm$0.8&71.8$\pm$0.6&86.2&76.4$\pm$0.2&66.8$\pm$0.2&71.6\\

   {GTA~\cite{sankaranarayanan2018generate}}&89.5$\pm$0.5&97.9$\pm$0.3&99.8$\pm$0.4&87.7$\pm$0.5&72.8$\pm$0.3&71.4$\pm$0.4&86.5&75.2$\pm$0.4&66.5$\pm$0.3&70.8\\

    {DAAA~\cite{kang2018deep}}&86.8$\pm$0.2&99.3$\pm$0.1&100.0$\pm$0.0&88.8$\pm$0.4&74.3$\pm$0.2&73.9$\pm$0.2&87.2&75.8$\pm$0.3&68.2$\pm$0.1&72.0\\
    

    {CDAN~\cite{long2018conditional}}&94.1$\pm$0.1&98.6$\pm$0.1&100.0$\pm$0.0&92.9$\pm$0.2&71.0$\pm$0.3&69.3$\pm$0.3&87.7&78.6$\pm$0.2&67.1$\pm$0.1&72.8\\
   


    {CAN~\cite{kang2019contrastive}}&94.5$\pm$0.3&99.1$\pm$0.2&99.8$\pm$0.2&95.0$\pm$0.3&78.0$\pm$0.3&77.0$\pm$0.3&90.6&79.4$\pm$0.3&68.9$\pm$0.2&74.1\\




  
  {TIGDA (Ours)}&93.2$\pm$0.2&99.4$\pm$0.4&99.8$\pm$0.1&93.6$\pm$0.3&76.7$\pm$0.2&75.7$\pm$0.3&89.7&80.3$\pm$0.4&71.4$\pm$0.3&75.8\\
    
    \bottomrule
\end{tabular}
}
\end{center}
\label{tab:compoffice}
\end{table*}

\section{TIGDA beyond PBS}
\bb{
In this section, we examine the effectiveness of the proposed method TIGDA on two datasets, Imaging Flow Cytometry~\cite{lippeveld2019classification} and Office-31~\cite{saenko2010adapting}, apart from PBS. In Cytometry dataset, WBCs from whole blood samples were stained using a ImageStream-X MK II imaging flow cytometer. A three channel image is extracted with two bright-field (at wavelengths of 420 nm - 480 nm and 570 nm - 595 nm) and a dark-field channel. Four classes of WBCs are employed in this study: Eosinophil (1470 images), Neutrophil (4809 images), Lymphocyte (4570 images) and Monocyte (1239 images). The objective of this experiment is to examine if TIGDA can perform domain adaptation when the source is Cytometry data and the target is PBS and vice versa. Since Cytometry data doesn't have the notion of color, we take the grayscale version of the PBS dataset with a $60\times60$ central crop (in all the images) representing the nucleus. Figure \ref{fig:cytofig} depicts a sample image from each class of the Cytometry dataset and the PBS dataset which apparently shows a significant domain shift.} \bb{Office-31~\cite{saenko2010adapting}, a publicly available standard dataset for UDA tasks (some sample images are given in the supplementary material), contains images from 31 common object types taken with three different imaging sources namely Dslr (D), Webcam (W) and Amazon (A). The objective of UDA is to adapt between these three domains.}

\bb{
Table \ref{tab:compoffice} lists the results of TIGDA along with some SOTA UDA methods for domain adaptation tasks on both the Office-31 and Cytometry datasets. It is seen that on Cytometry and gray-PBS datasets, TIGDA performs the best by significantly improving upon the Source Only model for gray-PBS\textrightarrow Cyto. and Cyto.\textrightarrow gray-PBS tasks. Whereas, on the Office-31 dataset, TIGDA's average performance is comparable (less than a percent) to the best SOTA method. All these experiments firmly demonstrate the effectiveness of TIGDA in UDA despite not using the target data during training.}
\begin{figure}[!t]
\includegraphics[width=.48\textwidth,height=.236\textwidth]{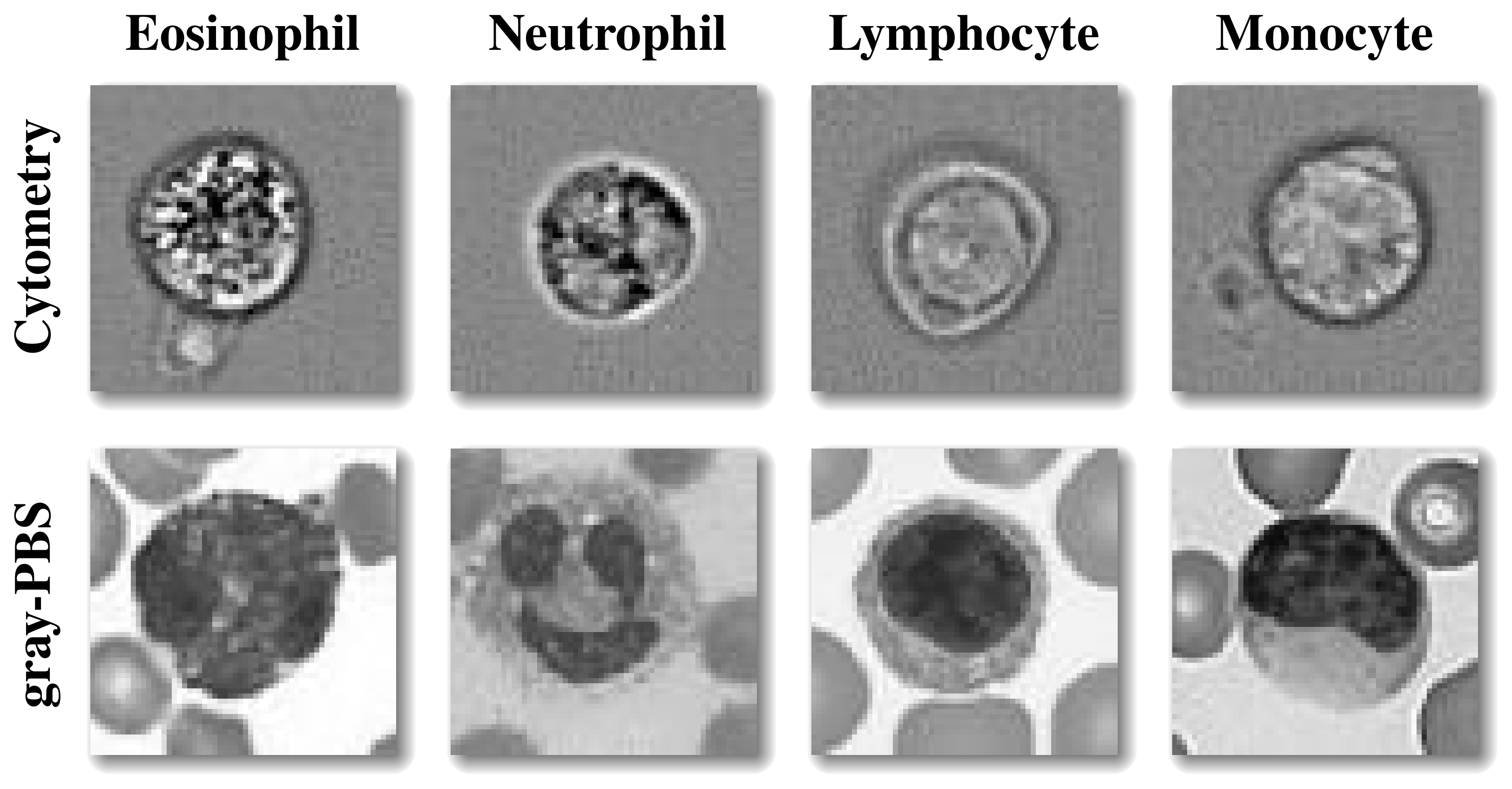}
\caption{\bb{Imaging Flow Cytometry~\cite{lippeveld2019classification} and grayscale Peripheral Blood Smear (gray-PBS) White Blood Cell datasets.}}
\label{fig:cytofig}
\end{figure}
\section{Conclusion}
\bb{In this work, we have considered the problem of domain shift occurring with the CNN-based classifiers for WBC classification. The performance of the existing \cc{deep learning} based techniques is known to degrade with the change in camera characteristics. We cast the problem of performance degradation of WBC classifiers with the change in camera as that of \cc{Unsupervised Domain Adaptation} (UDA) and propose a method that is devoid of need for access to the target data during training. We have demonstrated the efficacy of the proposed method for UDA with experiments on multiple datasets acquired under different settings. A few possible future directions can be: (i) extension of TIGDA for medical data beyond WBC, (ii) combining multiple sources for UDA.}

\section{Acknowledgements}
\bb{We sincerely thank the Associate Editor and the Anonymous Reviewers for their thoughtful comments that helped to significantly improve our paper. We also thank Maxim Lippeveld, Ghent University for his generous help in providing and navigating through the Cytometry dataset. We thank Sameer Ambekar and Aayush Tyagi for their help in experiments.} 

\bibliographystyle{ieeetr}
\bibliography{bibtex}
%






\clearpage

\title{Target-Independent Domain Adaptation for WBC Classification using Generative Latent Search \\--Supplementary--}
 \author{
Prashant Pandey, Prathosh AP, Vinay Kyatham, Deepak Mishra and 
Tathagato Rai Dastidar
}
\maketitle
\setlength{\dbltextfloatsep}{0.2cm}
\section{Proof for Lemma 1}:
\begin{lemma}

If $\tilde{\mathbf{x}}_\mathcal{S} \in \mathcal{S}_n$ is the point such that $\mathfrak{D\{\tilde{\mathbf{x}}_\mathcal{T}},\tilde{{\mathbf{x}}}_\mathcal{S}\}< \mathfrak{D\{\tilde{\mathbf{x}}_\mathcal{T}},\mathbf{x} \} \ \forall \ \mathbf{x}\ \in\ \mathcal{S}_n $, then as ${n\to\infty}$,\  $\tilde{\mathbf{x}}_\mathcal{S}$ converges to  $\tilde{\mathbf{x}}_\mathcal{T}$ with probability $1$. 
\end{lemma}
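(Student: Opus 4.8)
The plan is to invoke the classical nearest-neighbour convergence argument (as in \cite{cover1967nearest}), recast in the notation of the excerpt. First I would fix the target point $\tilde{\mathbf{x}}_\mathcal{T}$ and abbreviate $r_n := \mathfrak{D}\{\tilde{\mathbf{x}}_\mathcal{T}, \tilde{\mathbf{x}}_\mathcal{S}\} = \min_{1 \le i \le n} \mathfrak{D}\{\tilde{\mathbf{x}}_\mathcal{T}, \mathbf{x}_i\}$ for the distance from $\tilde{\mathbf{x}}_\mathcal{T}$ to its nearest neighbour in $\mathcal{S}_n$, where $\mathbf{x}_1, \mathbf{x}_2, \dots$ is one i.i.d.\ sequence from $\mathcal{P}_s$ and $\mathcal{S}_n$ consists of its first $n$ terms. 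Since $\mathcal{S}_n \subseteq \mathcal{S}_{n+1}$, the sequence $(r_n)_{n\ge 1}$ is non-increasing and bounded below by $0$, hence converges almost surely to a limit $r_\infty \ge 0$; the lemma is exactly the assertion $r_\infty = 0$ almost surely.

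The core estimate is a small-ball bound. For $\epsilon > 0$ set $B(\tilde{\mathbf{x}}_\mathcal{T}, \epsilon) = \{\mathbf{x} \in \mathfrak{X} : \mathfrak{D}\{\tilde{\mathbf{x}}_\mathcal{T}, \mathbf{x}\} < \epsilon\}$ and $p_\epsilon := \mathcal{P}_s\big(B(\tilde{\mathbf{x}}_\mathcal{T}, \epsilon)\big)$. I would argue that $p_\epsilon > 0$ for every $\epsilon > 0$: this is the place where the modelling assumption enters, namely that the target sample lies in the support of the source distribution (the domain shift alters appearance but both distributions describe the same underlying objects), so every neighbourhood of $\tilde{\mathbf{x}}_\mathcal{T}$ carries positive $\mathcal{P}_s$-mass. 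Granting $p_\epsilon > 0$, independence gives $\Pr\big(r_n \ge \epsilon\big) = \Pr\big(\mathbf{x}_i \notin B(\tilde{\mathbf{x}}_\mathcal{T}, \epsilon)\ \text{for all } i \le n\big) = (1 - p_\epsilon)^n$. The events $\{r_n \ge \epsilon\}$ decrease to $\{r_\infty \ge \epsilon\} = \bigcap_i \{\mathbf{x}_i \notin B(\tilde{\mathbf{x}}_\mathcal{T}, \epsilon)\}$, so by continuity of measure $\Pr\big(r_\infty \ge \epsilon\big) = \lim_{n\to\infty}(1 - p_\epsilon)^n = 0$ (equivalently $\sum_n (1-p_\epsilon)^n < \infty$ and Borel--Cantelli applies).

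Finally I would discharge the $\epsilon$-dependence by a countable-intersection argument: applying the previous step with $\epsilon = 1/k$ for every $k \in \mathbb{N}$ and intersecting the resulting probability-one events yields $r_\infty < 1/k$ for all $k$ almost surely, hence $r_\infty = 0$ almost surely, which is the claim. The main obstacle I anticipate is not the probabilistic bookkeeping, which is routine, but the justification that $p_\epsilon > 0$; cleanly this requires an explicit hypothesis such as $\mathrm{supp}(\mathcal{P}_t) \subseteq \mathrm{supp}(\mathcal{P}_s)$ (or $\mathcal{P}_t \ll \mathcal{P}_s$) together with the standard fact that in a separable metric space the support of a Borel probability measure has full measure and each of its points meets every open neighbourhood in positive measure. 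A minor secondary point is the measurability of the nearest-neighbour selection when ties occur in $\mathfrak{D}$, which is handled by a routine measurable-selection argument in the separable metric space $\{\mathfrak{X}, \mathfrak{D}\}$.
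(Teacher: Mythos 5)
Your proof follows essentially the same route as the paper's: both reduce the claim to the small-ball estimate $\Pr\big(\min_{i\le n}\mathfrak{D}\{\mathbf{x}_i,\tilde{\mathbf{x}}_\mathcal{T}\}\ge\delta\big)=(1-p_\delta)^n\to 0$ and conclude from there. The one substantive difference is to your credit: you correctly flag that $p_\epsilon>0$ needs $\tilde{\mathbf{x}}_\mathcal{T}$ to lie in $\mathrm{supp}(\mathcal{P}_s)$ (e.g.\ via $\mathrm{supp}(\mathcal{P}_t)\subseteq\mathrm{supp}(\mathcal{P}_s)$), whereas the paper attributes this to separability of $\mathfrak{X}$ alone, which is not sufficient; your monotonicity-of-$r_n$ plus countable-intersection step also upgrades the paper's per-$\delta$ limit statement to genuine almost-sure convergence.
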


\begin{proof}
Let $\mathbb{B}_r({\tilde{\mathbf{x}}_\mathcal{T}})$  be a closed ball of radius $r$ around $\tilde{\mathbf{x}}_\mathcal{T}$ under the metric $\mathfrak{D}$. That is,  $\mathbb{B}_r({\tilde{\mathbf{x}}_\mathcal{T}}) = \{\mathbf{x}:\mathfrak{D\{\tilde{\mathbf{x}}_\mathcal{T}},\mathbf{x} \}\leq r\}$. Since $\mathfrak{X}$ is a separable metric space, $\forall r>0$, $\mathbb{B}_r({\tilde{\mathbf{x}}_\mathcal{T}})$ has non-zero probability measure [40]. That is, 
\begin{equation}\mathbf{Pr}\big(\mathbb{B}_r({\tilde{\mathbf{x}}_\mathcal{T}})\big) \triangleq \int\limits_{\mathbb{B}_r({\tilde{\mathbf{x}}_\mathcal{T}})}\mathcal{P}_s(\mathbf{x}) \ d\mathbf{x} > 0
\end{equation}
For any $\delta>0$ , the probability that  none of the points in $\mathcal{S}_n$ are within the ball $\mathbb{B}_\delta({\tilde{\mathbf{x}}_\mathcal{T}})$ of radius $\delta$ is given by:
\setlength{\textfloatsep}{1pt}
\begin{equation}
\mathbf{Pr}\bigg[ \min \limits_{i=1,2..,n} \mathfrak{D\{\mathbf{x}_i,\tilde{\mathbf{x}}_\mathcal{T}}\} \geq \delta  \bigg] = \big[ 1- \mathbf{Pr}\big(\mathbb{B}_\delta({\tilde{\mathbf{x}}_\mathcal{T}})\big) \big]^n
\end{equation}
Therefore, the probability of $\tilde{{\mathbf{x}}}_\mathcal{S} \in  \mathcal{S}_n$, lying within $\mathbb{B}_\delta({\tilde{\mathbf{x}}_\mathcal{T}})$ is given by: 
\setlength{\textfloatsep}{1pt}
\begin{align}
\mathbf{Pr}\bigg[ \tilde{{\mathbf{x}}}_\mathcal{S} \in  \mathbb{B}_\delta({\tilde{\mathbf{x}}_\mathcal{T}})  \bigg] &= 1 -\big[ 1- \mathbf{Pr}\big(\mathbb{B}_\delta({\tilde{\mathbf{x}}_\mathcal{T}})\big) \big]^n\\
&= 1\ \ as \ n\rightarrow \infty
\end{align}
Thus, given any $\delta>0$, with probability $1$, $\exists \  \tilde{{\mathbf{x}}}_\mathcal{S} \in \mathcal{S}_n$ that is within  $\delta$ distance from $\tilde{\mathbf{x}}_\mathcal{T}$ as $n\rightarrow \infty$
\end{proof}

\vspace{2.9 in}
\section{Algorithm for TIGDA}

\begin{algorithm}
    \caption{\textbf{Target-Independent Generative Domain Adaptation (TIGDA)}}
    \label{alg:trainalgo}
    
    \hspace*{\algorithmicindent} 
    
    \begin{algorithmic}[1] 
    \phase{{Training VAE on source data}}
    \textbf{Input}: Source dataset $\mathcal{S}_{n}= \{\rvx_{1},..., \rvx_{n} \}$, Number of source images $n$, Encoder $g_{\phi}$, Decoder $h_{\theta}$, Trained Perceptual Model ${P}_{\psi}$, Learning rate $\eta$, Batchsize $B$. \textbf{Output}: Optimal parameters $\phi^*$, $\theta^*$.
    \State Initialize parameters $\phi$, $\theta$
    \REPEAT
    \State sample batch $\{\rvx_{i}\}$ from dataset $\mathcal{S}_{n},$ for $ i = 1,..., B$  
    \State $\mu_\rvz^{(i)}, \sigma_\rvz^{(i)} \gets g_{\phi} ( \rvx_{i})$
    \State sample ${\rvz}_{i} \sim \mathcal{N}(\mu_\rvz^{(i)},$ ${\sigma_\rvz^{(i)}}^2)$
    \State $\hat{\rvx}_{i} \gets h_{\theta}({\rvz}_{i})$ 
    \State $ \mathcal{L}_{r} \gets \sum_{i=1}^{B} \left\Vert \rvx_{i} - \hat{\rvx}_{i}\right\Vert_2^2$ 
    \vspace*{0.08cm}
    \State $ \mathcal{L}_{p} \gets \sum_{i=1}^{B} \left\Vert P_{\psi}( \rvx_{i}) - P_{\psi}(\hat{\rvx}_{i})\right\Vert_2^2 $
    
    \State $ \mathcal{L}_{g}\gets\mathcal{L}_{r} + \mathcal{L}_{p} + \sum_{i=1}^{B} \mathbb{D}_{KL} \left[\mathcal{N}(\mu_\rvz^{(i)},{\sigma_\rvz^{(i)}}^2 ) \,|| \,\mathcal{N}(0, 1) \right]$
    \State $ \mathcal{L}_{h} \gets  \mathcal{L}_{r} + \mathcal{L}_{p}$ 
    \State $ \phi \gets \phi + \eta \nabla_{\phi}\mathcal{L}_{g} $ 
    \State $ \theta \gets \theta + \eta \nabla_{\theta}\mathcal{L}_{h}$ 
    \UNTIL{convergence of $\phi$, $\theta$ }
    \phase{Inference - Latent Search for target images }
     \textbf{Input}: Target image $\rm \tilde{\rvx}_\mathcal{T}$, Trained decoder $h_{\theta^{\ast}}$, Learning rate $\eta$. \textbf{Output}: `closest-clone' $\tilde{{\mathbf{x}}}_\mathcal{S}$  for the target image $\rm \tilde{\rvx}_\mathcal{T}$.  
    \State sample ${\mathbf{z}}$ from $\mathcal{N}(0,1)$
    \REPEAT
    
    \State $ \mathcal{L}_{ssim} \gets 1 - \text{SSIM}({\rm \tilde{\rvx}}_\mathcal{T}, h_{\theta^*}({\rvz}))$
    \State $ {\rvz} \gets {\rvz}  + \eta \nabla_{\rvz}\mathcal{L}_{ssim} $
    \UNTIL{convergence of  $ \mathcal{L}_{ssim}$}
    \State $ \rm \tilde{\rvz}_\mathcal{S} \gets {\rvz}$
    
    \State $  \tilde{\rvx}_\mathcal{S} \gets h_{\theta^*} (\tilde{\mathbf{z}}_\mathcal{S})$
    \end{algorithmic}
\end{algorithm}


\begin{figure*}[th]
\centering
    \subfloat[ADDA]{\includegraphics[width=0.16\linewidth,height=0.14\linewidth]{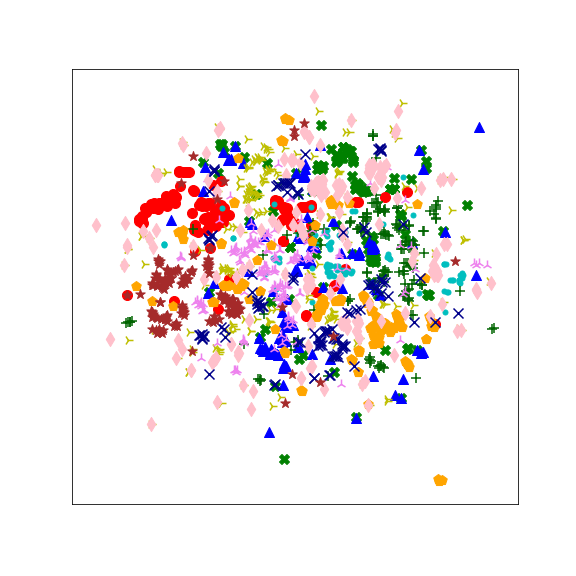}
        \label{fig:adda}}
     \subfloat[GTA]{\includegraphics[width=0.16\linewidth,height=0.14\linewidth]{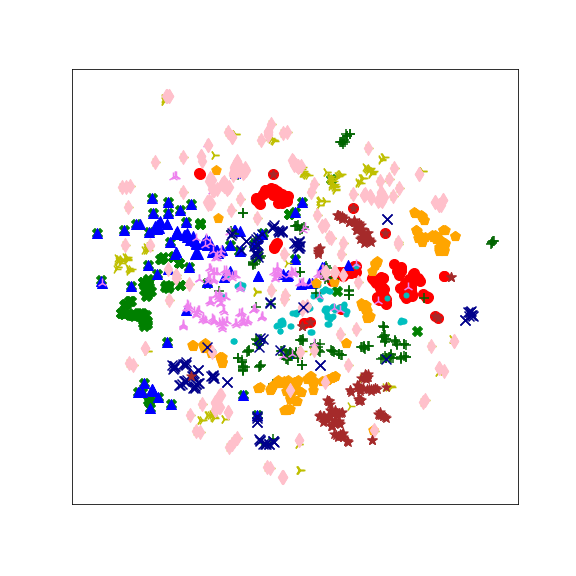}
      \label{fig:gta}}
         \subfloat[DIRT-T]{\includegraphics[width=0.16\linewidth,height=0.14\linewidth]{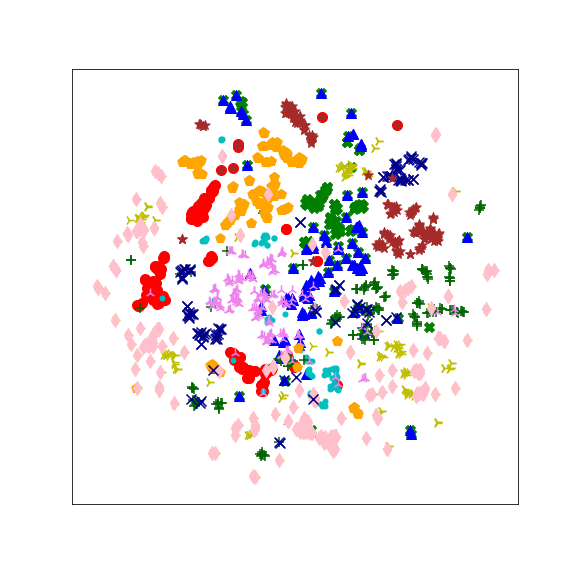}
      \label{fig:dirtt}}
    \subfloat[TAT]{\includegraphics[width=0.16\linewidth,height=0.14\linewidth]{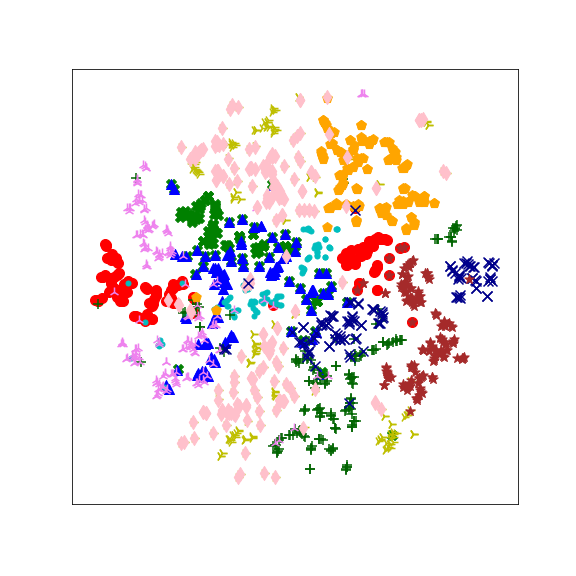}
      \label{fig:tat}} 
     \subfloat[DAL]{\includegraphics[width=0.16\linewidth,height=0.14\linewidth]{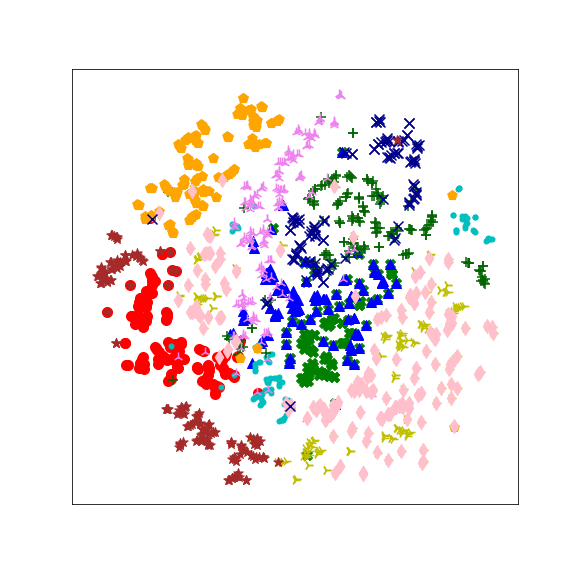}
      \label{fig:dal}} 
     \subfloat[TIGDA]{\includegraphics[width=0.16\linewidth,height=0.14\linewidth]{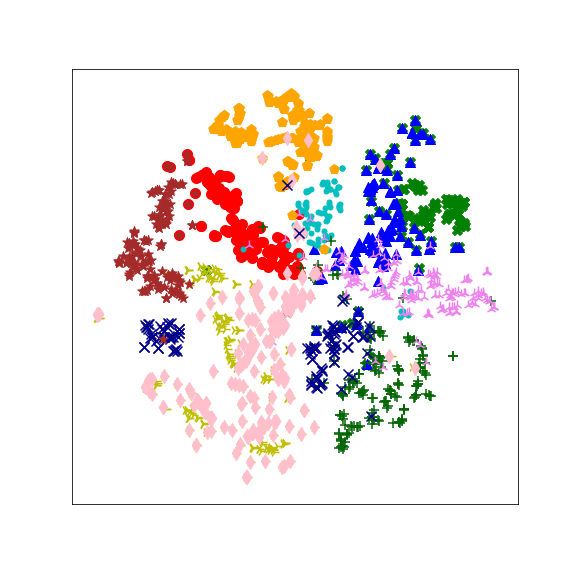}
      \label{fig:ours_tsne}} 
 \caption{PCA plots of the first two principal component using features generated by ADDA, GTA, DIRT-T, TAT, DAL and TIGDA on task A\textrightarrow C.}
\label{fig:pca}
\end{figure*}

\begin{figure*}[th]
\centering
    \subfloat[ADDA]{\includegraphics[width=0.30\linewidth,height=0.30\linewidth]{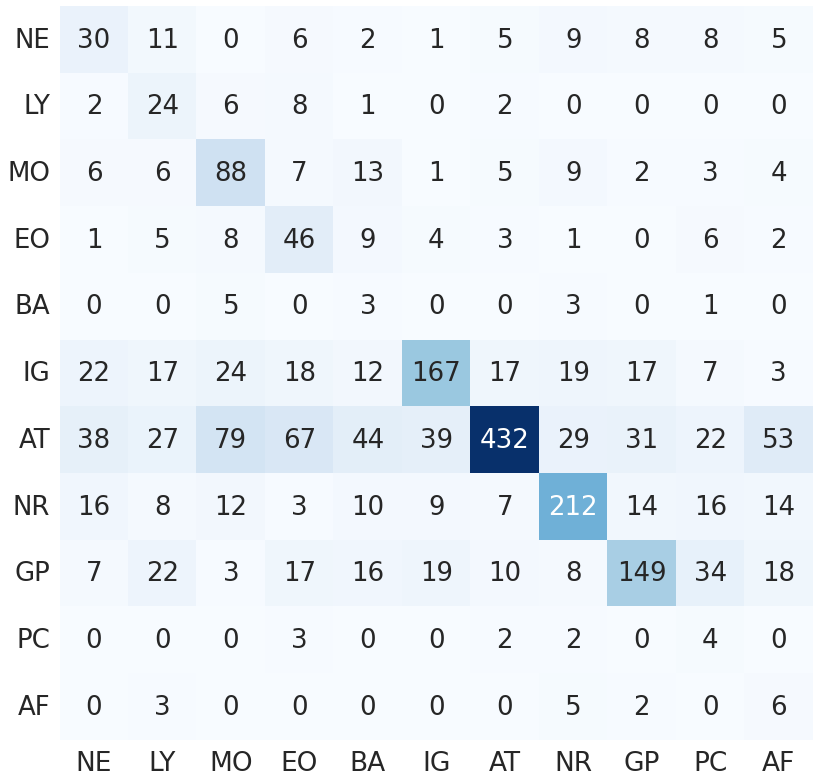}
        \label{fig:adda}}
     \subfloat[GTA]{\includegraphics[width=0.30\linewidth,height=0.30\linewidth]{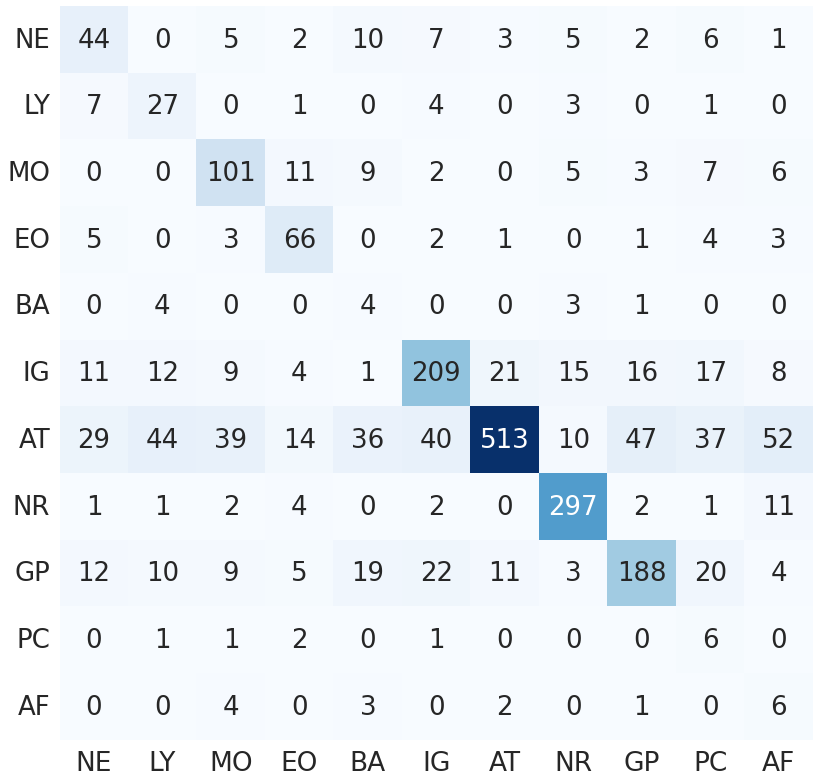}
      \label{fig:gta}}
         \subfloat[DIRT-T]{\includegraphics[width=0.30\linewidth,height=0.30\linewidth]{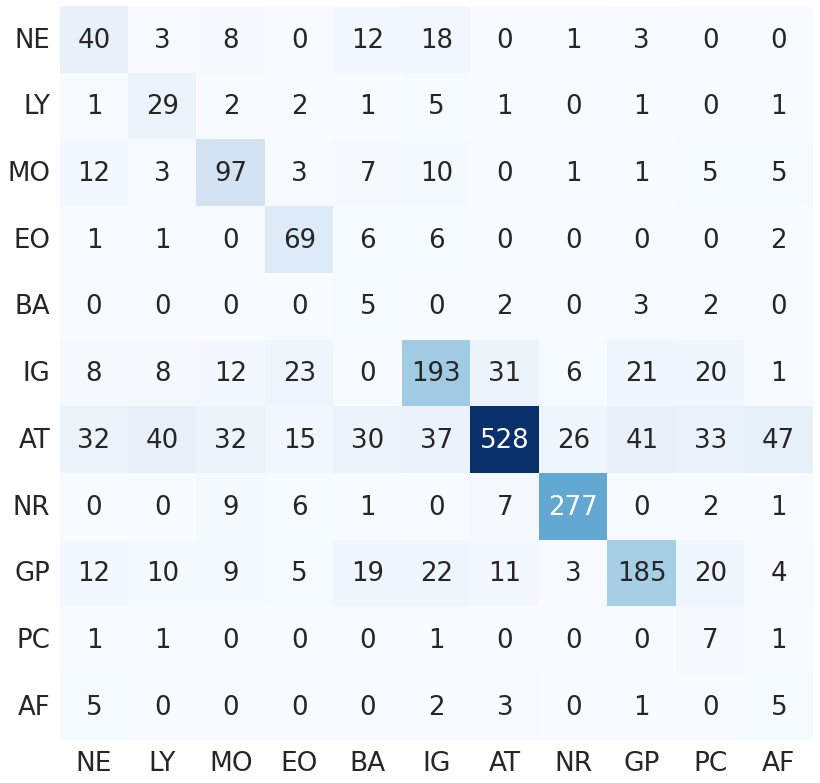}
      \label{fig:dirtt}}
      \\
    \subfloat[TAT]{\includegraphics[width=0.30\linewidth,height=0.30\linewidth]{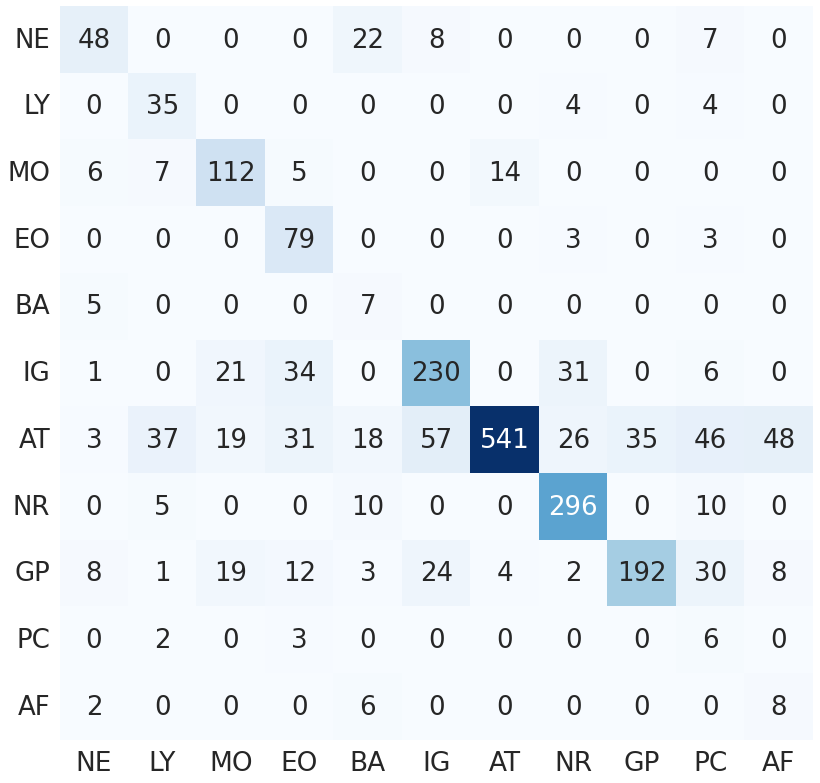}
      \label{fig:tat}} 
     \subfloat[DAL]{\includegraphics[width=0.30\linewidth,height=0.30\linewidth]{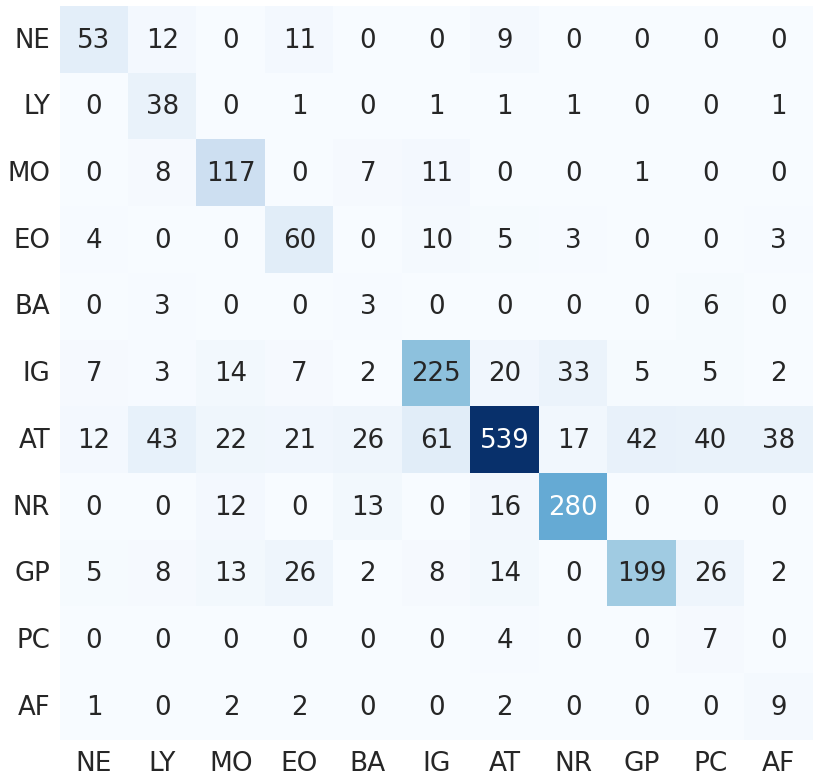}
      \label{fig:dal}} 
     \subfloat[TIGDA]{\includegraphics[width=0.30\linewidth,height=0.30\linewidth]{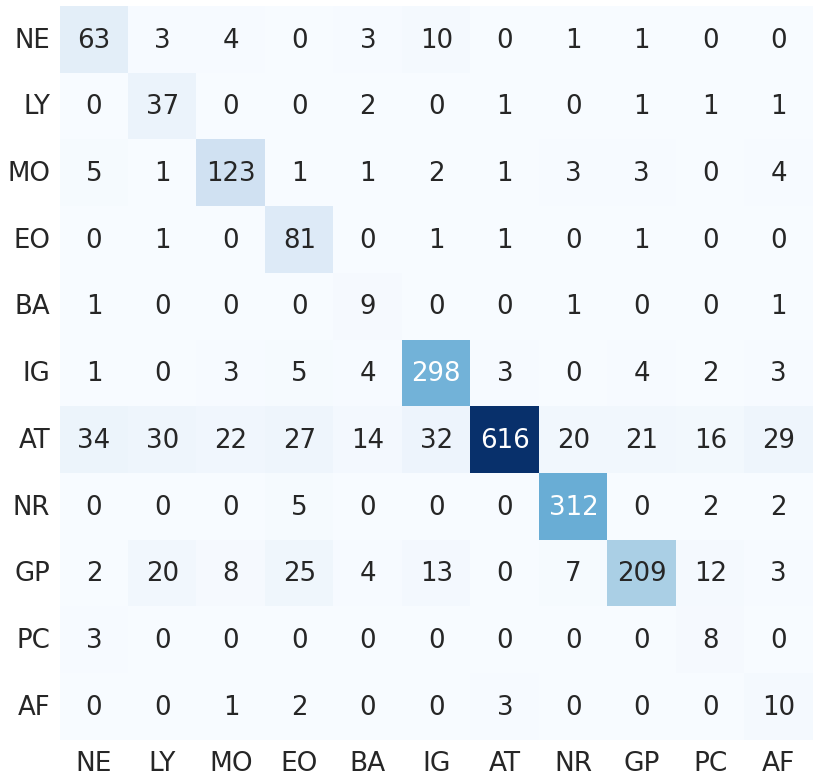}
      \label{fig:ours_tsne}} 
 \caption{Confusion Matrices for ADDA, GTA, DIRT-T, TAT, DAL and TIGDA on task A\textrightarrow C. Classes are Neutrophil (NE), Lymphocyte (LY), Monocyte (MO), Eosinophil (EO), Basophil (BA), Immature granulocytes (IG), Atypical (AT), Nucleated red blood cells (NR), Giant platelets (GP), Platelet clumps (PC), Artefact (AF).}
\label{fig:pca}
\end{figure*}

\begin{table*}
\caption{Encoder architecture for the Variational Auto-Encoder (VAE) in the proposed method (TIGDA). Convolution kernel is $3\times3$ and for Leaky ReLU $\alpha=0.2$.}
\centering
  \begin{tabular}{cc}
    \toprule
        {Layer (type)} & Output shape 
        \\
    \midrule
    encoder\_input (InputLayer) & (128, 128, 3) \\
    Conv1 (Convolution) & (128, 128, 128) \\
    leakyReLU1 (Activation) & (128, 128, 128) \\
    Conv2 (Convolution) & (64, 64, 128) \\
    leakyReLU2 (Activation) & (64, 64, 128)\\
    Conv3 (Convolution) & (32, 32, 128) \\
    leakyReLU3 (Activation) & (32, 32, 128) \\
    Conv4 (Convolution) & (16, 16, 128) \\
    leakyReLU4 (Activation) & (16, 16, 128) \\
    Conv5 (Convolution) & (8, 8, 128) \\
    leakyReLU5 (Activation) & (8, 8, 128) \\
    Conv6 (Convolution) & (4, 4, 128) \\
    leakyReLU6 (Activation) & (4, 4, 128) \\
    FC1 (Dense) & (1024) \\
    Z (Dense) & (64) \\
    \bottomrule
  \end{tabular}
  \label{table:enc}
\end{table*}

\begin{table*}
\caption{Decoder architecture for the VAE in TIGDA. Convolution kernel is $3\times3$ and for Leaky ReLU $\alpha=0.2$.}
\centering
  \begin{tabular}{cc}
    \toprule
        {Layer (type)} & Output shape 
        \\
    \midrule
    decoder\_input (InputLayer) & (64) \\
    FC2 (Dense) & (1024) \\
    leakyReLU7 (Activation) & (1024) \\
    FC3 (Dense) & (2048) \\
    leakyReLU8 (Activation) & (2048) \\
    Deconv1 (Deconvolution) & (4, 4, 128) \\
    leakyReLU9 (Activation) & (4, 4, 128) \\
     Deconv2 (Deconvolution) & (8, 8, 128) \\
     leakyReLU10 (Activation) & (8, 8, 128) \\
     Deconv3 (Deconvolution) & (16, 16, 128) \\
     leakyReLU11 (Activation) & (16, 16, 128) \\
     Deconv4 (Deconvolution) & (32, 32, 128) \\
     leakyReLU12 (Activation) & (32, 32, 128) \\
     Deconv5 (Deconvolution) & (64, 64, 128) \\
     leakyReLU13 (Activation) & (64, 64, 128) \\
     Deconv6 (Deconvolution) & (128, 128, 3) \\
     tanh1 (Activation) & (128, 128, 3) \\
     edge\_input (InputLayer) & (128, 128, 6)\\
     edge\_concat (Concatenate) & (128, 128, 9)\\
  Conv7 (Convolution) & (128, 128, 128) \\
  leakyReLU14 (Activation) & (128, 128, 128) \\
    Conv8 (Convolution) & (128, 128, 128) \\
    leakyReLU15 (Activation) & (128, 128, 128) \\
    Conv9 (Convolution) & (128, 128, 3) \\
    tanh2 (Activation) & (128, 128, 3) \\
    \bottomrule
  \end{tabular}
  \label{table:enc}
\end{table*}

\begin{figure*}[t]

\centering
 \includegraphics[width=0.81\textwidth,height=0.294\textwidth]{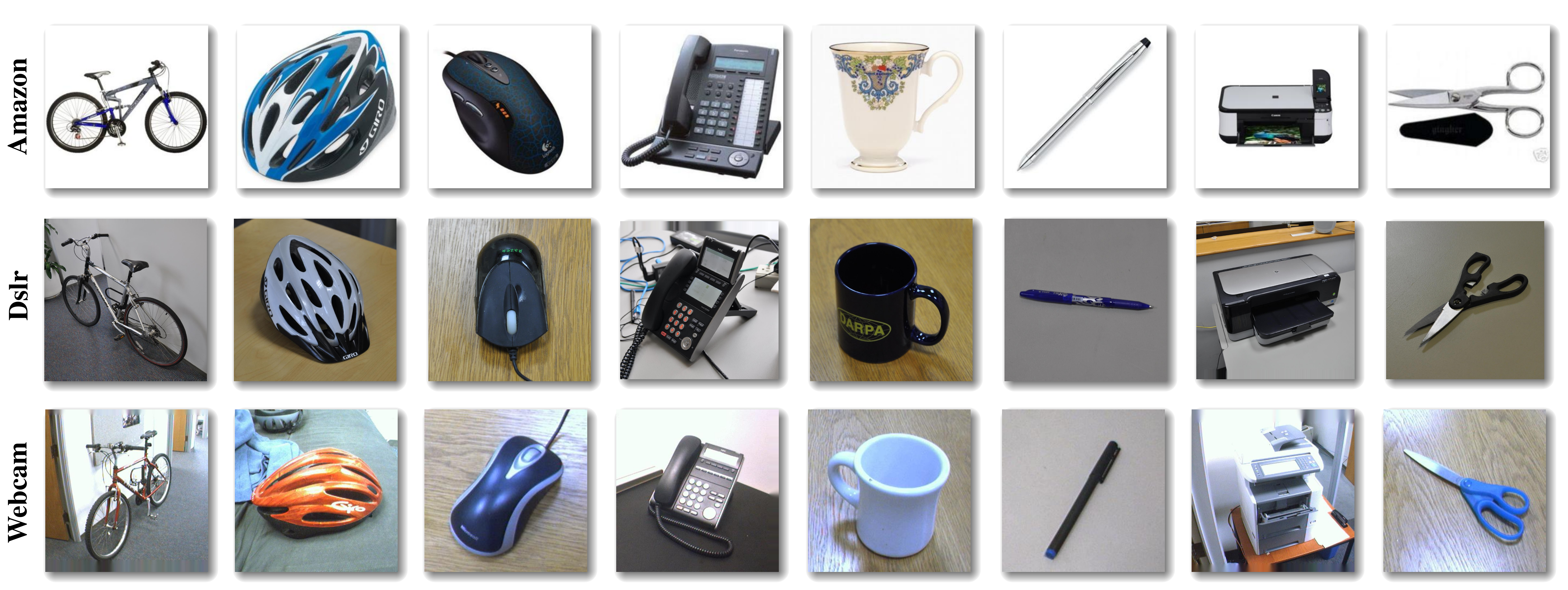}
\caption{Samples from the Office-31 dataset from the three sources, Amazon, Dslr and Webcam.}
\label{fig:office31}
\end{figure*}

\end{document}